\documentclass[lettersize, 10 pt, journal]{IEEEtran}  

\IEEEoverridecommandlockouts                              

\usepackage{amsmath}
\usepackage{amssymb}
\usepackage{tikz}
\usepackage{ulem}
\renewcommand{\baselinestretch}{.99}

\usepackage{amsthm}
\usepackage[textsize=scriptsize]{todonotes}
\usepackage{lipsum}

\usepackage{xspace}
\usepackage{array}
\usepackage{multirow}
\usepackage{afterpage}
\usepackage{colortbl}
\definecolor{cello}{HTML}{ffe6cc}

\usepackage{graphics} 
\usepackage{etoolbox}

\newtheorem{myremark}{Remark}
\newtheorem{myassumption}{Assumption}
\newtheorem{mylem}{Lemma}
\newtheorem{mydef}{Definition}
\newtheorem{myprop}{Proposition}

\newtheorem{mytheorem}{Theorem}

\usepackage{color}
\usepackage{tabularray}
\definecolor{DoveGray}{rgb}{0.4,0.4,0.4}
\definecolor{Woodsmoke}{rgb}{0.094,0.101,0.105}
\newcommand{\refineCBF}{\textsc{refineCBF}\xspace}
\newcommand{\saferefineCBF}{\textsc{safeadapt-refineCBF}\xspace}
\newcommand{\algname}{\textsc{HJ-Patch}\xspace}
\usepackage{xcolor}
\usepackage{accents}
\usepackage{centernot}
\usepackage{algorithm}
\usepackage{algorithmic}
\usepackage{subcaption} 
\usepackage{mathtools}
\usepackage[font=small,labelfont=bf]{caption}
\usepackage{makecell}

\newcommand{\R}{\mathbb{R}}

\newcommand{\statedim}{n}
\newcommand{\ctrldim}{m}
\newcommand{\distdim}{o}
\newcommand{\state}{x}
\newcommand{\trajectory}{\mathbf{\state}}
\newcommand{\ctrl}{u}
\newcommand{\ctrlsignal}{\mathbf{\ctrl}}
\newcommand{\dist}{d}
\newcommand{\distsignal}{\mathbf{\dist}}
\newcommand{\diststrategy}{\mathfrak{d}}
\newcommand{\stateset}{\mathcal{X}}
\newcommand{\ctrlset}{\mathcal{U}}
\newcommand{\distset}{\mathcal{D}}
\newcommand{\distsignalset}{\mathbb{D}}
\newcommand{\ctrlsignalset}{\mathbb{U}}
\newcommand{\diststrategyset}{\Xi}

\newcommand{\cldyn}{F}
\newcommand{\oldyn}{f}
\newcommand{\ctrldyn}{g}
\newcommand{\distdyn}{w}

\newcommand{\constraintset}{\mathcal{L}}
\newcommand{\vf}{h}

\newcommand{\gi}[1]{\gamma_{#1}}
\newcommand{\decayrate}{\lambda}
\newcommand{\lie}{L_{\cldyn}}
\newcommand{\lieop}[1]{L_{{#1}}}
\newcommand{\lieopt}{L^*_{\cldyn}}
\newcommand{\safeset}{\mathcal{H}}
\newcommand{\safesetboundary}{\partial\safeset}
\newcommand{\nominalctrl}{\hat{\pi}(\state)}

\newcommand{\constraintfunc}{\ell}
\newcommand{\viability}[1]{{\mathcal{S}(#1)}}
\newcommand{\initialvf}{\vf^0}

\newcommand{\converged}[1]{{#1}^*}
\newcommand{\iter}[2]{{#1^{(#2)}}}
\newcommand{\boundarycells}{\partial^\zeta\safeset}
\newcommand{\activeset}{R}
\newcommand{\neighbors}[1]{X^{#1}}
\newcommand{\neighbororder}{p}

\newcommand{\oraclecells}{C}

\definecolor{mygray}{HTML}{959FC4}
\definecolor{onlinegray}{gray}{0.8}
\definecolor{onlinegray}{HTML}{519797}
\newcommand{\quasisynonym}{discrete-approximated}
\newcommand{\Quasisynonym}{Discrete-approximated}

\usepackage{pifont}
\newcommand{\cxmark}{\color{red}{\ding{55}}}
\newcommand{\greentick}{\color{green}{\ding{51}}}

\makeatletter
\let\NAT@parse\undefined
\makeatother
\usepackage{hyperref}

\title{\LARGE \bf
Refining Almost-Safe Value Functions on the Fly
}

\author{Sander Tonkens, Sosuke Kojima, Chenhao Liu, Judy Masri, and Sylvia Herbert
\thanks{The authors are with University of California, San Diego \{\href{mailto:sander@ucsd.edu}{sander}, \href{mailto:skojima@ucsd.edu}{skojima}, \href{mailto:chl238@ucsd.edu@ucsd.edu}{chl238}, \href{mailto:jmohamad@ucsd.edu}{jmohamad}, \href{mailto:sherbert@ucsd.edu}{sherbert}\}@ucsd.edu.}}

\begin{document}
\normalem

\maketitle

\begin{abstract}
Control Barrier Functions (CBFs) are a powerful tool for ensuring robotic safety, but designing or learning valid CBFs for complex systems is a significant challenge. While Hamilton-Jacobi Reachability provides a formal method for synthesizing safe value functions, it scales poorly and is typically performed offline, limiting its applicability in dynamic environments. 
This paper bridges the gap between offline synthesis and online adaptation.
We introduce \refineCBF for refining an approximate CBF—whether analytically derived, learned, or even unsafe—via warm-started HJ reachability. We then present its computationally efficient successor, \algname, which accelerates this process through localized updates. Both methods guarantee the recovery of a safe value function and can ensure monotonic safety improvements during adaptation.
Our experiments validate our framework's primary contribution: in-the-loop, real-time adaptation, in simulation (with detailed value function analysis) and on physical hardware. 
Our experiments on ground vehicles and quadcopters show that our framework can successfully adapt to sudden environmental changes, such as new obstacles and unmodeled wind disturbances, providing a practical path toward deploying formally guaranteed safety in real-world settings.

\end{abstract}
\thispagestyle{empty}
\pagestyle{empty}
\section{Introduction}
The widespread adoption of learning-based modules for perception and control has unlocked new capabilities in robotics, from dynamic locomotion~\cite{MikiLeeEtAl2022} to complex navigation~\cite{ShahSridharEtAl2023}. 
However, their effectiveness is often tightly coupled to their training conditions, limiting their ability to adapt to uncertain or evolving scenarios.
This lack of adaptability presents a significant barrier to deployment in safety-critical applications, where environmental changes or distribution shifts can lead to catastrophic failures. 
This necessitates safety modules capable of adapting to changing conditions online in real-time, directly at the control level.

A promising approach for realizing such a module is the safety filter: a component that monitors the actions from a decision-making module and intervenes only when necessary to prevent failure~\cite{BrunkeGreeffEtAl2021, Wabersich2023DataDrivenSF}. 
Many of these filters are built upon a safety value function, a scalar field over the state space designed to encode safety information. 
Control Barrier Functions (CBF)~\cite{AmesGrizzleEtAl2014} are a prominent example. 
Typically, the function's value at a given state indicates the system's safety margin, while its gradient delineates the set of control actions that maintain or improve safety.
The filter leverages the function’s value and gradient to constrain the nominal control input to the set of safe control actions at the current state. 
However, the central challenge is designing a valid safe value function, i.e., one that guarantees persistent trajectory safety from purely pointwise enforcement.

This challenge has been approached from two main directions, each with significant drawbacks:
\begin{enumerate}
    \item \textbf{Constructive methods}, such as Hamilton-Jacobi (HJ) Reachability~\cite{BansalChenEtAl2017b}, offer formal guarantees by numerically solving for the value function. However, they scale poorly with state dimensionality.
    \item \textbf{Approximate methods}, which leverage function approximators like neural networks, offer scalability but sacrifice formal guarantees and do not generalize easily.
\end{enumerate}
Crucially, both paradigms struggle to adapt to real-world variations encountered post-deployment, such as sim-to-real gaps or unexpected changes in system dynamics.

In this work, we argue that the constructive and approximate methods for generating safe value functions are not mutually exclusive but are, in fact, complementary. 
We propose to bridge the gap between formal guarantees and scalability by using a data-driven approximation as a warm start for a formal, constructive refinement process that leverages HJ Reachability. 

\subsection{Related work}
Safety is a cornerstone of robotics research, with methods ranging from optimal control and model predictive control (MPC)~\cite{RosoliaBorrelli2018} to formal verification~\cite{AbateAhmedEtAl2021}. 
While optimal control can formally incorporate safety constraints, solving these problems for complex, nonlinear systems is often computationally intractable, sacrificing guarantees for real-time performance. 
In contrast, optimal control-inspired approaches, such as constrained Reinforcement Learning, penalize constraint violations or provide guarantees solely in expectation, limiting their use in safety-critical settings. 
This has led to the rise of safety filters, standalone modules that minimally modify a primary policy's control output to ensure safety~\cite{Wabersich2023DataDrivenSF}.

A popular approach to designing these filters is to use a safety value function—a scalar field over the state space that encodes safety information. Current research on these value functions largely falls into two complementary categories.

\paragraph{Control Barrier Functions (CBFs)}
CBFs provide a principled way to enforce safety online. By constraining the rate at which a system can approach a failure boundary, a CBF can be used to formulate a simple, pointwise optimization problem that minimally modifies a nominal control input to guarantee safety~\cite{Ames2017ControlBF}. This enforcement mechanism is highly efficient to solve for control-affine systems, for which it reduces to a quadratic program.

However, the primary challenge for CBFs is characterizing safety—that is, finding a valid CBF in the first place. 
While they can be derived analytically for some systems, this is difficult for complex nonlinear dynamics, especially under external disturbances. 
To address this, learning-based approaches have been proposed to approximate CBFs from data~\cite{DawsonQinEtAl2021, RobeyHuEtAl2020}. 
These methods, while scalable, often lack formal guarantees and their reliability depends heavily on the quality and density of the training data. 
Approaches like backup CBFs~\cite{GurrietMoteEtAl2020} provide stronger guarantees but introduce significant computational complexity. Additionally, their overall performance hinges on the expert design of a fixed ``backup'' policy.

\paragraph{Hamilton-Jacobi Reachability (HJR)}
In contrast to CBFs, HJ Reachability provides a formal method to characterize safety. 
It formulates a worst-case optimal control problem to compute the set of all states from which failure is inevitable, yielding a value function with strong safety guarantees for general nonlinear systems with disturbances~\cite{Mitchell2005ATH}.

However, this approach faces two significant hurdles.
First, its traditional solution involves numerically solving a Hamilton-Jacobi partial differential equation (PDE) on a state-space grid. This method suffers from the curse of dimensionality, as its computational complexity grows exponentially with the number of state variables, typically limiting them to systems with fewer than six state dimensions~\cite{BansalChenEtAl2017b}. 
Numerous works have sought to improve the scalability of the original formulation directly by leveraging problem structure (e.g., via warm-starting~\cite{HerbertBansalEtAl2019}, decomposition~\cite{ChenHerbertEtAl}) or using learning-based approximations to eliminate state-space gridding (e.g., DeepReach~\cite{BansalTomlin2021}, and reinforcement learning~\cite{FisacLugovoyEtAl2019}). However, they often do so at the cost of approximation errors, extensive pre-computation, or overly conservative behavior.  
Second, and more critical for online use, HJR is typically used as a least-restrictive safety filter~\cite{BajcsyBansalEtAl2019}, where the optimal control is enforced only when near the safety boundary, often leading to either conservative or chattering control behavior. 

In contrast, another family of methods sidesteps the PDE entirely by trading generality for computational tractability. 
These HJ-inspired approaches solve a related but more constrained problem, such as using Sum-of-Squares (SoS) optimization for systems with polynomial dynamics~\cite{TedrakeManchesterEtAl2010, MajumdarTedrake2017} or using simpler geometric representations like zonotopes~\cite{AlthoffKrogh2011} which are efficient to propagate but may be overly conservative for general nonlinear systems. 
In contrast to the direct HJ approaches, these methods are often not deployed online with a safety filter, but e.g., generate safe ``funnels'' around trajectories~\cite{MajumdarAhmadiEtAl2013} or leverage pre-computed sets for collision checking~\cite{KousikVaskovEtAl2017, KousikVaskovEtAl2020}. 

\paragraph{Bridging the Gap}
Summarizing, CBF-based methods offer efficient enforcement but rely on a safe value function, while HJR-based methods provide formal characterization but typically lack efficient enforcement. 
For example,~\cite{SoSerlinEtAl2024} leverages HJ-based supervised rollouts for learning a CBF~\cite{SoSerlinEtAl2024}.
Choi et al.~\cite{ChoiLeeEtAl2021} first established a direct link between CBFs and HJR using 
Control Barrier Value Functions (CBVFs). 
They bridge this gap by using HJR-like computations (with discounting) to construct a valid CBVF; unfortunately, this approach inherits the high computational cost of HJR and does not leverage approximate solutions. 

Alternatively, \refineCBF~\cite{TonkensHerbert2022} and its extension \algname~\cite{TonkensToofanianEtAl2024}, leveraged approximate initial value functions (from e.g., a learned CBF) to warmstart a HJR-based formal refinement process. 
Summarized, these methods leverage an initial approximation (and an adaptive updating scheme) for faster convergence. 
However, they construct the value function offline for a given environment and are limited to numerical simulations. 
This work instead focuses on adapting value functions online to adapt to real-world changes to the environment or the system, while retaining the safety and convergence guarantees of~\cite{TonkensHerbert2022} and~\cite{TonkensToofanianEtAl2024}. 

\subsection{Contributions}

This work presents a formally-grounded adaptive safety framework that enables online, in-the-loop adaptation to sensed changes in the system and environment. 
Our algorithmic extensions to \refineCBF and \algname rely on iteratively refining any given value function until convergence while adapting to changes in the system and the environment.  
Specifically, our contributions are: 
\begin{itemize}
    \item A theoretical extension of the control-only \refineCBF~\cite{TonkensHerbert2022} method to formally guarantee safety for systems subject to unknown but bounded external disturbances. 
    \item Unified algorithmic framework spanning offline and in-the-loop refinement, including theoretical analysis and implications for online deployment. 
    \item A rigorous comparative analysis against typical CBF approaches (disjoint CBFs, backup CBFs), highlighting the benefits of principally refining online in realistic simulated experiments with real-time requirements.
    \item Extensive, novel experiments on hardware (mobile robots and quadcopter) that demonstrate real-time adaptation to online detection of unforeseen obstacles and wind disturbances.
\end{itemize}

\subsection{Organization}
The remainder of this paper is organized as follows. Section~\ref{sec:prelims} reviews the necessary background on value function-based safety. We then introduce our core algorithms and their theoretical guarantees: \refineCBF in Section~\ref{sec:refinecbf} and its computationally efficient counterpart, \algname, in Section IV~\ref{sec:hjpatch}. We validate our framework through extensive experiments, presenting detailed simulations in Section~\ref{sec:sim}, and hardware demonstrations on multiple robotic platforms in Section~\ref{sec:hardware}. Summarizing, Section~\ref{sec:future} provides use-cases and future work, followed by a Conclusion in Section~\ref{sec:conclusion}.

\section{Preliminaries}\label{sec:prelims}
We consider a control- and disturbance affine dynamical system
\begin{equation}\label{eq:dynamics}
    \dot \state = \cldyn(\state,\ctrl,\dist)\triangleq \oldyn(\state) + \ctrldyn(\state)\ctrl + \distdyn(\state) \dist,
\end{equation}
which describes a wide range of robots, 
with state $\state\in\stateset \subset \R^\statedim$, input $\ctrl \in \ctrlset \subset \R^\ctrldim$, and disturbance $\dist \in \distset \subset \R^\distdim$, where $\ctrlset$ and $\distset$ are polytopes. 
We assume the dynamics $\cldyn:\R^\statedim \times \ctrlset \times \distset \mapsto \R^\statedim$ are uniformly continuous, bounded, and Lipschitz continuous in $\state$. 
The Lipschitz continuity of the dynamics ensures the existence and uniqueness of system trajectories $\trajectory_{\state,t}^{\ctrlsignal, \distsignal}(s)$ starting from state $\state$ at time $t$ under a control signal $\ctrlsignal$ and disturbance signal $\distsignal$.
We denote the set of measurable functions $\ctrlsignal:\R_{\geq t} \to \ctrlset$ and $\distsignal:\R_{\geq t} \to \distset$ as $\ctrlsignalset$ and $\distsignalset$, representing the allowed control and disturbance signals.

Let $\constraintset\subseteq \stateset$ represent the constraint set, i.e. the set of non-failure states of the system~\eqref{eq:dynamics}. 
An element $\state\in\stateset$ is considered instantaneously \textit{safe} if $\state \in \constraintset$. 
Its complement, $\constraintset^C$, denotes the set of failure states. 
The goal of a safety filter is to keep the system within the constraint set $\constraintset$. 
In this work, the disturbance $\distsignal$ is determined in reaction to the control signal in the form of a disturbance strategy $\diststrategy:\ctrlsignalset \mapsto \distsignalset$. We restrict $\diststrategy$ to draw from a \emph{nonanticipative strategy}, which prohibits using future knowledge of the control signal to preserve causality~\cite{EvansSougandis1984}, with $\diststrategyset$ the set of all such strategies. 

A foundational concept for guaranteeing safety is control invariance.

\begin{mydef}[Robust Control Invariant Set]\label{def:CIset}
A set $\mathcal{C} \subseteq \stateset$ is a robust control invariant set if for any state $\state \in \mathcal{C}$  and initial time $t\in \R$, there exists a control signal $\ctrlsignal\in \ctrlsignalset$ such that for all disturbance strategies $\diststrategy \in \diststrategyset$, the system trajectory remains in the set, i.e., $\trajectory_{\state,t}^{\ctrlsignal, \diststrategy}(s) \in \mathcal{C}$ for all $s \ge t$.
\end{mydef}
By this definition, the empty set $\emptyset$ is also considered control invariant, as the required condition holds vacuously. 
For any given constraint set $\constraintset$, we are often interested in finding the largest possible safe region within it. This is known as the viability kernel.

\begin{mydef}[Viability Kernel,~\cite{AubinBayenEtAl2011}]\label{def:viability} 
For any set $\mathcal{K}$, let the viability kernel $\viability{\mathcal{K}}$ be such that $\viability{\mathcal{K}} \subseteq \mathcal{K}$ and if there exists $\mathcal{C} \subseteq \mathcal{K}$ such that $\mathcal{C}$ is control invariant, then $\mathcal{C} \subseteq \viability{\mathcal{K}}$, hence $\viability{\mathcal{K}}$ is the largest control invariant set in $\mathcal{K}$.
\end{mydef}

\subsection{Control Barrier Functions}

Let a value function $\vf: \stateset \mapsto \R$ be Lipschitz continuous and let $\safeset:=\{\state \mid h(\state) \geq 0\}\subseteq\stateset$ be the 0-superlevel set of $\vf$. 
We define the Lie derivative $\lie\vf(\state, \ctrl, \dist):=\langle \frac{\partial h}{\partial \state}, \cldyn(\state, \ctrl, \dist)\rangle$ and the Hamiltonian $\lieopt \vf(\state):=\inf_{\dist\in\distset} \sup_{\ctrl\in\ctrlset}\lie \vf(\state)$. For non-continuously differentiable functions, the Lie derivative can be formulated through the Clarke generalized gradient~\cite{Clarke1998}.

Nagumo's theorem~\cite{Blanchini1999} provides a condition for control invariance: assume $\frac{\partial \vf}{\partial \state}\neq 0$ for all $\state \in \safesetboundary:=\{\state \mid \vf(\state) = 0\}$, then $\safeset$ is control invariant if and only if
\begin{equation}\label{eq:nagumo}
    \lieopt \vf(\state) \geq 0 \text{ for all } \state \in \safesetboundary.
\end{equation}

In this paper, a safe value function is considered to be any function $\vf$ that satisfies Nagumo's theorem, and whose 0-superlevel set $\safeset$ is a subset of $\constraintset$, $\safeset\subseteq \constraintset$. 
The CBF condition extends Nagumo's theorem from the boundary to the interior of the set $\safeset$ with a Lyapunov-like condition, ensuring the state does not approach the boundary too quickly.

\begin{mydef}[Control Barrier Function~\cite{Ames2017ControlBF}]\label{def:cbf}
Let $\safeset$ denote the 0-superlevel set of a continuously differentiable value function $\vf: \stateset \mapsto \R$, then $\vf(\state)$ is a control barrier function for \eqref{eq:dynamics} if there exists an extended class $\mathcal{K}$ function $\alpha$ and a set $\mathcal{C}$ with $\safeset \subseteq \mathcal{C} \subset \R^\statedim$ such that
\begin{equation}\label{eq:cbf}
    \lieopt \vf(\state) \geq -\alpha(\vf(\state))
\end{equation}
for all $\state \in \mathcal{C}$. 
\end{mydef}
A CBF defines a control invariant set $\safeset$~\cite{AmesCooganEtAl2019}. 
For completeness, we define the state-dependent allowable control range for a CBF as follows:
\begin{equation}\label{eq:cbf_online}
    \mathcal{G}_\vf(\state) := \left\{\ctrl \in \ctrlset \bigm\vert \min_{\dist\in\distset}\lie{\vf}(\state,\ctrl,\dist) + \alpha(\vf(\state)) \geq 0\right\}
\end{equation}
Any choice of control law for which $\ctrl \in \mathcal{G}$ will ensure that the system~\eqref{eq:dynamics} remains in $\safeset$ indefinitely.

The popularity of CBFs can be attributed to the ease with which they can be used in a safety filter online through an optimization problem.
At each time step, a nominal (safety-agnostic) policy $\nominalctrl$ is passed through a filter that solves:
\begin{equation}\label{eq:online-cbf-qp-full}
    \begin{split}
        \ctrl^*(\state)= \> \arg \min \limits_{\ctrl} \quad &\lVert \ctrl - \nominalctrl \rVert_2^2 \\
        \text{subject to}\quad & \lieop{\ctrldyn} \vf(\state) \ctrl \geq - \gamma\vf(\state) -\lieop{f} \vf(\state)\\ 
        & \hspace{1.7cm}-\min_{\dist\in \distset}\lieop{\distdyn} \vf(\state) \dist \\
        & \ctrl \in \ctrlset.
    \end{split}
\end{equation}
In this paper we assume $\alpha(\state)=\gi{} \state$, for $\gi{} >0$. 
As the disturbance and control are independent for~\eqref{eq:dynamics}, we can first solve for the disturbance which decreases the value function most through an exhaustive search of its vertices, with the resulting optimization problem to solve for $\ctrl^*$ being a quadratic program (QP), which can be solved in real time ($>100$ Hz) for typical robotics applications.

The main challenge of this approach, however, lies in finding a valid function $\vf$ for which the QP~\eqref{eq:online-cbf-qp-full} is feasible for all states $\state$ in its 0-superlevel set.
To precisely discuss the quality of candidate CBFs often used in practice, we classify CBFs into 4 possible categories:
\begin{table}[H]
\caption{The 4 possible categories of CBFs and their properties.}
\label{tab:cbfs}
\centering
\setlength{\tabcolsep}{2pt}
\setlength{\extrarowheight}{1.1pt}
\begin{tabular}{c|c|c|c}
    & \thead{Characterizes safety: \\$\safeset \cap \constraintset = \emptyset$} & \thead{Characterizes control \\ invariance:~\eqref{eq:online-cbf-qp-full} $\forall \state$} &  \thead{Non-intrusive \\ safety filter} \\
    Invalid CBF & \greentick & \cxmark & \greentick \color{black}{$\slash$}\cxmark \\\vspace{-.1cm}
    Unsafe CBF & \cxmark & \greentick & \greentick \\\vspace{-.2cm}
    \thead{Conservative \\ (safe) CBF} & \greentick & \greentick & \cxmark \\
    \thead{Desired \\ (safe) CBF} & \greentick & \greentick & \greentick
\end{tabular}
\end{table}

This work proposes methods to refine a candidate CBF into a desired, provably safe one. 
The resulting functions, which are constructed using tools from HJ reachability theory, are technically Control Barrier Value Functions (CBVFs), which we detail in the following section. 

\subsection{HJR-based Control Barrier Value Functions}
Given the aforementioned constraint set $\constraintset$, we formulate an associated constraint function, $\constraintfunc(\state)$, such that $\constraintfunc(\state) \geq 0$ if and only if $\state \in \constraintset$, e.g., a (weighted) signed-distance function.
For a desired decay rate $\decayrate$, see e.g.~\eqref{eq:online-cbf-qp-full}, we define the associated CBVF~\cite{ChoiLeeEtAl2021}:
\begin{equation}\label{eq:cbvf_trajectory}
    \vf_{\decayrate}(\state,t) = \min_{\diststrategy \in \diststrategyset} \max_{\ctrlsignal \in \ctrlsignalset}\min_{s \in [t,0]}e^{\decayrate(s-t)} \constraintfunc\left(\trajectory_{\state,t}^{\ctrlsignal, \diststrategy}(s)\right).
\end{equation}
We note that $\vf_{\decayrate}$ is uniquely defined for each fixed value of $\decayrate \geq 0$, and for $\decayrate=0$ its definition matches the definition of the standard HJ reachability value function formulation~\cite{FisacChenEtAl2015}. 

It is important to note that any CBF is a CBVF, while the inverse does not hold. 
\begin{myremark}[Differentiating a CBVF from a CBF]
In addition to satisfying Eq.~\eqref{eq:cbf} for all states, a CBF $\vf(\state)$ is, by definition (Def.~\ref{def:cbf}), also continuously differentiable, i.e. $\vf(\state)\in C^1$. 
In contrast, a CBVF is differentiable almost everywhere, see~\cite{ChoiLeeEtAl2021}. 
\end{myremark}

With a value function defined, the system can maintain safety during deployment by incorporating the value function into a safety filter. 
Similarly to CBFs, we can define the admissible control set that retains (finite time) safety, where we distinguish between the offline decay rate $\decayrate$ and online decay rate $\gi{}$.
\begin{equation}\label{eq:cbvf_online_timevarying}
    \mathcal{G}_{\vf_\decayrate}^{\gi{}}(\state, s) = \left\{\ctrl \in \ctrlset \bigm\vert \min_{\dist\in\distset} \dot{\vf}_{\decayrate}(\state,s) +  \gi{}\vf_{\decayrate}(\state,s) \geq 0 \right\}
\end{equation}
Importantly, the discount factor used to compute the CBVF offline $\decayrate$ does not require matching the CBVF decay rate online $\gi{}$; 
Specifically:
\begin{myprop}
[Forward completeness and safety with larger online discount rate]\label{prop:larger_online}
Applying a discount factor $\gi{}$ online~\eqref{eq:online-cbf-qp-full} for a CBVF that is constructed~\eqref{eq:cbvf_trajectory} with $\decayrate{} \leq \gi{}$ maintains control invariance of the safe set.
\begin{proof}
By inspection of Eq~\eqref{eq:cbvf_online_timevarying}, if $\gi{} \geq \decayrate$, we have that $\mathcal{G}_{\vf_{\decayrate}}^{\gi{}}(\state,s) \supseteq \mathcal{G}_{\vf_{\decayrate}}^{\decayrate}(\state,s) \neq \emptyset$ for all $\state \in \safeset$ and $s \in [t, 0]$. 
Hence, applying a larger discount rate $\gi{}$ online maintains pointwise feasibility. 
In addition, at the boundary of the safe set, $\state \in \safesetboundary$, we have that $\vf_{\decayrate} = 0$ for all $\decayrate\geq 0$ (see~\cite{ChoiLeeEtAl2021}), hence  $\mathcal{G}_{\vf_{\decayrate}}^{\gi{}}(\state,s) = \mathcal{G}_{\vf_{\decayrate}}^{\decayrate}(\state,s)$ by inspection of~\eqref{eq:cbvf_online_timevarying}. 
Combined, this ensures the same safety guarantees are maintained when applying a larger discount rate $\gi{}$ online.
\end{proof}
\end{myprop}

This enables setting $\decayrate=0$ during construction and applying any (positive) decay rate $\gi{}$ online. 
This is particularly useful as applying a non-zero discount factor offline is limited to finite-time safety problems~\cite{ChoiLeeEtAl2021}, whereas a discount factor $\decayrate=0$ holds for infinite-time safety, which this work focuses on.
Additionally, considering $\decayrate=0$ allows us to freely choose any $\gi{}>0$ for safety enforcement online while providing infinite-time safety guarantees. 
The associated admissible control set for a converged $\vf$, i.e. $D_t \vf=0$, is as follows:
\begin{equation}\label{eq:cbvf_online}
    \mathcal{G}^{\gi{}}(\state) = \left\{\ctrl \in \ctrlset \bigm\vert \min_{\dist\in\distset} \lie{\vf}(\state) +  \gi{}\vf(\state) \geq 0 \right\}.
\end{equation}

\subsection{Solutions to HJR value functions}
To solve for~\eqref{eq:cbvf_trajectory}, we present the associated continuous solution and its derived time discretized dynamic programming-based solution. 
For readability, we will drop the exponential term $\decayrate$, i.e. $\decayrate=0$, and refer to the equivalent formulas in~\cite{ChoiLeeEtAl2021} for the formulation including $\decayrate$. 
The value function for the $\gi{}=0$ case is defined as in~\cite{FisacChenEtAl2015}: 
\begin{equation}\label{eq:vf_trajectory}
    \vf(\state,t) = \min_{\diststrategy \in \diststrategyset} \max_{\ctrlsignal \in \ctrlsignalset}\min_{s \in [t,0]} \constraintfunc\left(\trajectory_{\state,t}^{\ctrlsignal, \diststrategy}(s)\right).
\end{equation}

We distinguish two popular formulations of HJ reachability to solve the safety scenario described in this paper. 
The first formulation, a single-boundary partial differential equation (PDE), forces contraction of the value function, and recovers the largest control invariant subset of the initial value function's 0-level set.
The second, a variational inequality (VI), recovers a control invariant set which is a subset of $\constraintset$.
In the classic HJR setting in which the terminal boundary condition is the constraint function, $\vf_b(\state)=\constraintfunc(\state)$, these two formulations are equivalent.

\subsubsection{Single-Boundary PDE Formulation \cite{MitchellBayenEtAl2005}}
The value function $\vf(\state,t)$ is the viscosity solution to the following Hamilton-Jacobi-Isaacs PDE (HJI-PDE):
\begin{align}\label{eq:pde_continuous}
    \begin{split}
        D_t \vf(\state,t) + \min\bigl\{0, \max_{\ctrl\in \ctrlset}\min_{\dist\in\distset} &D_x\vf(\state,\ctrl, \dist, \,t) \cdot \cldyn(\state,\ctrl,\dist) \bigr\}=0 \nonumber
    \end{split} \\
    \vf(x,0) &= \vf_b(x).
\end{align}

The discrete time equivalent of~\eqref{eq:pde_continuous} uses the following dynamic programming update rule:
\begin{equation}\label{eq:pde_dp}
    \begin{split}
        \iter{\vf}{k+1}(\state) = \iter{\vf}{k}(\state) + \iter{\Delta}{k} \min\{0, \lieopt \iter{\vf}{k}(\state)\},
    \end{split}
\end{equation}
with $\iter{\Delta}{k}$ denoting the time-step which is dynamically updated based on the magnitude of the Hamiltonians $\lieopt \iter{\vf}{k}(\state)$, see~\cite{Mitchell2008} and~\eqref{eq:finite_difference} for details. 
Eqs.~\eqref{eq:pde_continuous} and~\eqref{eq:pde_dp} define contractive mappings over time through the min-with-zero operation.

\subsubsection{Variational Inequality Formulation~\cite{FisacChenEtAl2015}}
The value function $\vf(\state,t)$ is the viscosity solution to the following HJI-VI:
\begin{align}\label{eq:vi_continuous}
    \begin{split}
        0 = \min\bigl\{&\constraintfunc(\state) - \vf(\state,t), \\
        & D_t \vf(\state,t) + \max_{\ctrl\in \ctrlset}\min_{\dist\in\distset} D_x\vf(\state,\ctrl, \dist, \,t) \cdot \cldyn(\state,\ctrl,\dist)\bigr\}\nonumber
    \end{split}
    \\
    &\vf(x,0) = \vf_b(x).
\end{align}

The discrete time equivalent of~\eqref{eq:vi_continuous} uses the following dynamic programming update rule:
\begin{equation}\label{eq:vi_dp}
    \begin{split}
        \iter{\vf}{k+1}(\state) = \min\left\{\constraintfunc(\state), \iter{\vf}{k}(\state) + \iter{\Delta}{k}\lieopt \iter{\vf}{k}(\state)\right\}. 
    \end{split}
\end{equation}

In practice, we solve~\eqref{eq:pde_dp} and~\eqref{eq:vi_dp} through spatial discretization on a pre-defined set of grid points as no closed-form solution exists. 
The Hamiltonian $\lieopt \iter{\vf}{k}(\state)$ is typically approximated using finite difference methods, namely (weighted) essentially non-oscillatory ((W)ENO) schemes~\cite{Shu1998} as follows:
\begin{equation}\label{eq:finite_difference}
    \hspace{-.2cm}\lie \vf(\state , \ctrl, \dist) := 
    \langle \nabla \vf(\state), \cldyn(\state,\ctrl,\dist)\rangle 
    \approx
    \langle \delta^p_{\Delta}(\vf), \cldyn(\state,\ctrl, \dist)\rangle,
\end{equation}

with $\delta^\neighbororder_{\Delta}$ a finite difference operator of order $\neighbororder$, i.e. the operator includes $\neighbororder$ neighbors in every dimension. 
In addition, the Hamiltonian $\lieopt \vf(\state)$ is computed through an exhaustive search of the vertices of $\ctrlset$ and $\distset$.
\begin{myremark}[Convergence of CBVF]
    A stationary solution to~\eqref{eq:pde_continuous} and~\eqref{eq:vi_continuous}, or equivalently, if~\eqref{eq:pde_dp} and~\eqref{eq:vi_dp} converge ($k \to \infty$), the result characterizes an infinite-time CBVF, which encodes safety for an infinite duration. 
    We drop the time-dependence for such a value function $\vf(\state)$.
\end{myremark}

\section{Refining CBFs}\label{sec:refinecbf}
This section details our primary contribution: a framework for refining a candidate value function $\initialvf$, using the HJI-VI formulation~\eqref{eq:vi_continuous}. 
This approach leverages an initial guess—such as a learned neural network or a hand-designed function—to ``warm-start" the computation, enabling faster convergence and online adaptation. 
We first establish the theoretical guarantees for our method in a static environment. 
We then present two algorithms: \refineCBF, a direct implementation of the refinement process, and \saferefineCBF, a variant with stronger intermediate safety guarantees, at the cost of added conservativeness. 
Finally, we discuss the practical implications of using these algorithms for in-the-loop adaptation in dynamic environments.

\subsection{Guarantees underlying refining CBFs}\label{subsec:theory_refinecbf}
We begin by establishing the theoretical guarantees of our approach in a static environment. 
In this section, we first prove our main result: that the refinement process, when warm-started with a candidate function $\initialvf$ is guaranteed to converge to a valid and safe CBVF (Theorem~\ref{thm:valid_CBVF}).
We then analyze the properties of the value function during convergence, establishing conditions under which safety is preserved throughout the refinement process (Lemma~\ref{lem:retain_safety}).

We define the boundary function of the HJI-VI~\eqref{eq:vi_continuous} as the candidate value function, $\vf_b=\initialvf$, shifting the objective from a standard safety evaluation in~\eqref{eq:vf_trajectory} to include a recursive refinement of the candidate value function. This defines the HJI-VI whose unique viscosity solution is the following trajectory-based value function:
\begin{equation}\label{eq:refinecbf_trajectory}
        \begin{split}
            \vf(\state,t) = \min_{\diststrategy \in \diststrategyset} \max_{\ctrlsignal \in \ctrlsignalset} \min\Bigg\{
            &
            \min_{s \in [t,0)}\constraintfunc\left(\trajectory_{\state,t}^{\ctrlsignal, \diststrategy}(s)\right), 
            \\&
            \initialvf\left(\trajectory_{\state,t}^{\ctrlsignal, \diststrategy}(0)\right)\Bigg\}.
        \end{split}
\end{equation}

To facilitate the proofs, we assume the following:
\begin{myassumption}[The candidate CBF is pointwise conservative with respect to the constraint function]\label{ass:conservative_guess} For all $\state$, we assume the candidate CBF $\initialvf(\state)$ satisfies $\initialvf(\state)\leq \constraintfunc(\state)$.
\end{myassumption}

As outlined in~\cite{TonkensHerbert2022}, Assumption~\ref{ass:conservative_guess} is not strictly required, but facilitates the proofs below. 
It can be trivially satisfied by defining a modified candidate CBF $\initialvf(\state)=\min\{\initialvf(\state), \constraintfunc(\state)\}$.

Theorem~\ref{thm:valid_CBVF} below is the key theoretical underpinning of \refineCBF.

\begin{mytheorem}[Convergence to a safe CBVF]\label{thm:valid_CBVF}
If~\eqref{eq:refinecbf_trajectory} converges to $\converged{\vf}(\state)$ as $t\to-\infty$, then $\converged{\vf}(\state)$ is a safe CBVF for all $\state\in\converged{\safeset}$, its 0-superlevel set. 
\end{mytheorem}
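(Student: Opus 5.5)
We need to establish two properties of the limit $\converged{\vf}$: its 0-superlevel set satisfies $\converged{\safeset}\subseteq\constraintset$, and it satisfies the Nagumo/CBVF condition~\eqref{eq:nagumo}, i.e.\ $\lieopt\converged{\vf}(\state)\geq 0$ on $\partial\converged{\safeset}$. Equivalently, the admissible set $\mathcal{G}^{\gi{}}(\state)$ in~\eqref{eq:cbvf_online} is nonempty for every $\state\in\converged{\safeset}$. The plan is to work directly with the trajectory representation~\eqref{eq:refinecbf_trajectory} for the first property and with the stationary HJI-VI~\eqref{eq:vi_continuous} for the second, invoking Proposition~\ref{prop:larger_online} at the end to pass from $\decayrate=0$ to any online rate $\gi{}>0$.

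\textbf{Step 1 (safety of the superlevel set).} For any $t<0$, the inner minimum in~\eqref{eq:refinecbf_trajectory} includes the term evaluated at $s=t$, namely $\constraintfunc(\trajectory_{\state,t}(t))=\constraintfunc(\state)$. Hence $\vf(\state,t)\leq\constraintfunc(\state)$ for every $t<0$, and at $t=0$ we have $\vf(\state,0)=\initialvf(\state)\leq\constraintfunc(\state)$ by Assumption~\ref{ass:conservative_guess}. Passing to the limit gives $\converged{\vf}\leq\constraintfunc$, so $\converged{\vf}(\state)\geq 0$ implies $\constraintfunc(\state)\geq 0$, that is, $\converged{\safeset}\subseteq\constraintset$.

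\textbf{Step 2 (invariance).} Convergence means $D_t\converged{\vf}=0$ in the viscosity sense. Substituting this into~\eqref{eq:vi_continuous} yields
\[
\min\bigl\{\constraintfunc(\state)-\converged{\vf}(\state),\ \max_{\ctrl\in\ctrlset}\min_{\dist\in\distset}D_x\converged{\vf}\cdot\cldyn(\state,\ctrl,\dist)\bigr\}=0 .
\]
In particular, the Hamiltonian term is $\geq 0$ everywhere, since neither argument of the minimum can be negative. Thus $\lieopt\converged{\vf}(\state)\geq 0$ for all $\state$, and in particular on $\partial\converged{\safeset}$. Nagumo's theorem then gives control invariance of $\converged{\safeset}$. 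It also shows that $\mathcal{G}^{0}(\state)\neq\emptyset$, and so by Proposition~\ref{prop:larger_online} $\mathcal{G}^{\gi{}}(\state)\neq\emptyset$ for all $\gi{}\geq 0$.

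As an alternative that avoids regularity issues, invariance can be argued from the trajectory side. Take the limit of~\eqref{eq:refinecbf_trajectory} and use the dynamic programming principle: $\converged{\vf}(\state)=\min_{\diststrategy}\max_{\ctrlsignal}\min\{\min_{s\in[t,t+\tau]}\constraintfunc(\cdot),\converged{\vf}(\trajectory(t+\tau))\}$. If $\converged{\vf}(\state)\geq 0$, there is a control keeping both $\constraintfunc\geq 0$ and $\converged{\vf}\geq 0$ over every horizon $\tau$ against all nonanticipative disturbance strategies. This is exactly Definition~\ref{def:CIset} applied to $\converged{\safeset}$.

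\textbf{Main obstacle.} The delicate step is Step 2 at points where $\converged{\vf}$ is not differentiable. There, $\lieopt$ must be interpreted through the Clarke generalized gradient, or in the viscosity sense, and Nagumo's nondegeneracy hypothesis $\partial\converged{\vf}/\partial\state\neq 0$ on the boundary may fail. The plan is to rely primarily on the dynamic programming (trajectory) argument for invariance, which needs no differentiability. The PDE argument then serves only to identify the pointwise admissible control set almost everywhere, consistent with the CBVF framework of~\cite{ChoiLeeEtAl2021}.
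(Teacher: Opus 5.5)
Your Step~2 is the paper's argument. Lemma~\ref{lem:CI_refineCBF} reads $L^*_F h^*\ge 0$ off the second argument of the stationary VI~\eqref{eq:stationary_vi} and applies Nagumo. Lemma~\ref{lem:cbvf_refinecbf} obtains the CBVF inequality from $L^*_F h^*\ge 0\ge-\gamma h^*$ on $\mathcal{H}^*$, which is what your appeal to Proposition~\ref{prop:larger_online} amounts to.

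The genuine difference is the containment step. The paper (Lemma~\ref{lem:viability_subset_refineCBF}) proves $\mathcal{H}^*\subseteq\mathcal{S}(\mathcal{L})$ by comparing the warm-started value with the standard HJR value. It shows $h(x,t;h^0)\le h(x,t;\ell)$ for every $t$ via Assumption~\ref{ass:conservative_guess}, then invokes the fact that standard HJR recovers the viability kernel. You instead read $h(x,t)\le\ell(x)$ off the $s=t$ term of~\eqref{eq:refinecbf_trajectory} (equivalently, off the first argument of~\eqref{eq:stationary_vi}). That gives only $\mathcal{H}^*\subseteq\mathcal{L}$, but this is precisely the paper's definition of a safe value function. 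Moreover, $\mathcal{H}^*\subseteq\mathcal{S}(\mathcal{L})$ then follows anyway from invariance and the maximality in Definition~\ref{def:viability}.

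Your route is more elementary. For $t<0$ it does not use Assumption~\ref{ass:conservative_guess} at all, and it avoids importing the external result that cold-started HJR recovers $\mathcal{S}(\mathcal{L})$. What the paper's comparison buys is a stronger statement about intermediate iterates: each $\mathcal{H}(t)$ lies inside the finite-horizon safe set $\{x \mid h(x,t;\ell)\ge 0\}$, not merely inside $\mathcal{L}$. Lemma~\ref{lem:viability_subset_refineCBF} is also reused as a stand-alone result in the proof of Theorem~\ref{thm:refineCBF_algorithm_safe}.

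One caution, since you say you would rely primarily on the dynamic-programming argument: as written it does not deliver Definition~\ref{def:CIset}.
\begin{itemize}
\item From $h^*(x)=\min_{\mathfrak{d}}\max_{\mathbf{u}}(\cdots)\ge 0$ you only get, for each nonanticipative strategy $\mathfrak{d}$, a response $\mathbf{u}$ that depends on $\mathfrak{d}$. The quantifiers cannot simply be swapped to produce one control signal that works against all strategies; for a fixed open-loop $\mathbf{u}$, that would amount to robustness against every disturbance signal.
\item The supremum need not be attained, so you only obtain trajectories remaining in $\{h^*\ge-\epsilon\}$.
\item A guarantee for ``every horizon $\tau$'' must still be turned into a single control on $[t,\infty)$.
\end{itemize}
To make this route primary, you would need a strategy-based notion of invariance. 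Isaacs' condition holds here because $u$ and $d$ enter~\eqref{eq:dynamics} separably, and that is what would let you pass to a nonanticipative control strategy against all disturbance signals. You would also need separate attainment and infinite-horizon arguments. Until then, keep the VI/Nagumo argument as the main line; it is exactly the paper's, at the same level of rigor.
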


The proof of Theorem~\ref{thm:valid_CBVF} relies on the following three foundational lemmas, which establish that the converged value function's 0-superlevel set is control invariant (Lemma~\ref{lem:CI_refineCBF}), a subset of the viability kernel (Lemma~\ref{lem:viability_subset_refineCBF}), and that the function itself satisfies the CBF inequality (Lemma~\ref{lem:cbvf_refinecbf}).

\begin{mylem}[Convergence to a control invariant set]\label{lem:CI_refineCBF}
    If~\eqref{eq:refinecbf_trajectory} converges to $\converged{\vf}(\state)$ as $t \to -\infty$, i.e. $\converged{\vf}(\state)$ characterizes a stationary solution of~\eqref{eq:vi_continuous}, then $\converged{\safeset}$ is control invariant, i.e. its Hamiltonian  $\lieopt{\converged{\vf}}(\state) \geq 0$ for all $\state \in \converged{\safesetboundary}$.

    \begin{proof}
        The stationary version of the HJI-VI, is as follows, see~\cite{FialhoGeorigou1999}:
        \begin{equation}\label{eq:stationary_vi}
            \hspace{-.1cm}\min\left\{\constraintfunc(\state) - \vf(\state), \max_{\ctrl\in\ctrlset}\min_{\dist\in\distset} D_{\state}\vf(\state, \ctrl, \dist) \cdot \cldyn(\state,\ctrl,\dist)\right\}=0.
        \end{equation}
        Upon inspection of the second term of the minimum operator, any $\converged{\vf}(\state)$ satisfying~\eqref{eq:stationary_vi} satisfies $\lieopt{\converged{\vf}}(\state)\geq 0$ for all $\state$. 
        Particularly, this holds for all $\state \in \converged{\safesetboundary}$, thus  $\converged{\safeset}$ is a control invariant set by Nagumo's theorem,~\eqref{eq:nagumo}, see~\cite{Blanchini1999}.
    \end{proof}
\end{mylem}

\begin{mylem}[Convergence to a subset of the viability kernel]\label{lem:viability_subset_refineCBF}
If~\eqref{eq:refinecbf_trajectory} converges to $\converged{\vf}(\state)$ as $t \to -\infty$, its associated 0-superlevel set $\converged{\safeset}$ is a subset of the viability kernel of the constraint set $\viability{\constraintset}$, i.e. $\converged{\safeset}\subseteq\viability{\constraintset}$.    
    \begin{proof}
        This proof builds on the proof of Theorem 1 in~\cite{HerbertBansalEtAl2019}. 
        We (a) leverage that standard HJ Reachability recovers a value function whose 0-superlevel set corresponds to the viability kernel~\cite{Aubin1991}, and then (b) prove that for any time $t\in (-\infty, 0]$ the warmstarted value function is pointwise upper bounded by standard HJ reachability's value function. 
        For the purposes of this proof we define the CBVF obtained when refining a CBF $\initialvf$, i.e. through~\eqref{eq:refinecbf_trajectory}, as $\vf(\state, t; \initialvf)$.
        In contrast, we define the CBVF obtained using standard HJ reachability, i.e. through~\eqref{eq:vf_trajectory}, as $\vf(\state,t; \constraintfunc)$. 
        Then, we aim to show that $\vf(\state,t;\initialvf) \leq \vf(\state,t;\constraintfunc)$ for all $\state, t$. 
        We observe:
        \begin{align*}
            \vf(\state,t;\initialvf) &= \min_{\diststrategy \in \diststrategyset} \max_{\ctrlsignal \in \ctrlsignalset} \min\Bigg\{\min_{s \in [t,0)}\constraintfunc\left(\trajectory(s)\right), \initialvf\left(\trajectory(0)\right)\Bigg\}\\
            &\leq \min_{\diststrategy \in \diststrategyset_{[t,0]}} \max_{\ctrlsignal \in \ctrlset_{[t,0]}} \min\Bigg\{\min_{s \in [t,0)}\constraintfunc\left(\trajectory(s)\right), \constraintfunc\left(\trajectory(0)\right)\Bigg\} \\
            &=\min_{\diststrategy \in \diststrategyset_{[t,0]}} \max_{\ctrlsignal \in \ctrlset_{[t,0]}} \min_{s \in [t,0]}\constraintfunc\left(\trajectory(s)\right)\\
            &=\vf(\state,t;\constraintfunc),
        \end{align*}
        where we drop $_{\state,t}^{\ctrlsignal, \diststrategy}$ from the trajectory $\trajectory(s)$ for readability.
        The inequality follows directly from Assumption~\ref{ass:conservative_guess}, i.e. $\initialvf(\state) \leq \constraintfunc(\state)$ for all $\state$. Therefore, as $t\to -\infty$, we have $\converged{\vf}(\state;\initialvf)\leq \converged{\vf}(\state;\constraintfunc)$.
        Combining (a) and (b) we have $\converged{\safeset} = \{\state \mid \converged{\vf}(\state;\initialvf) \geq 0\}\subseteq \{\state \mid \converged{\vf}(\state;\constraintfunc) \geq 0\} = \viability{\constraintset}$, with the last equality by Definition~\ref{def:viability}, concluding the proof.
    \end{proof}
\end{mylem}

\begin{mylem}[Convergence to a CBVF on a subset of the state space]\label{lem:cbvf_refinecbf}
If~\eqref{eq:refinecbf_trajectory} converges to $\converged{\vf}(\state)$ as $t \to -\infty$, i.e. $\converged{\vf}(\state)$ characterizes a stationary solution of~\eqref{eq:vi_continuous}, then $\converged{\vf}(\state)$ is a valid CBVF for all $\state \in \converged{\safeset}$.
\begin{proof}
    Similarly to Lemma~\ref{lem:CI_refineCBF}, we begin by inspecting the stationary version of~\eqref{eq:vi_continuous},~\eqref{eq:stationary_vi}. 
    The second term of the minimum operator implies that any $\converged{\vf}(\state)$ satisfying~\eqref{eq:stationary_vi} satisfies $\lieopt{\converged{\vf}}(\state)\geq 0$ for all~$\state$. 
    
    The value function $\converged{\vf}$, for any choice of $\gi{} \geq 0$ in~\eqref{eq:cbf}, with $\alpha(\converged{\vf}(\state))=\gi{}\converged{\vf}(\state)$ satisfies:
    \begin{equation*}
        \lieopt{\converged{\vf}}(\state) \geq 0 \geq -\gi{}\converged{\vf}(\state),
    \end{equation*}
    for all $\state\in\converged{\safeset}$, with the first inequality by satisfaction of~\eqref{eq:stationary_vi} and the second inequality from $\converged{\vf}(\state)\geq0$ for all $\state\in\converged{\safeset}$.
\end{proof}
\end{mylem}

We are now ready to prove Theorem~\ref{thm:valid_CBVF}:

    \begin{proof}[Proof of Thm~\ref{thm:valid_CBVF}]
        Safety is guaranteed by $\converged{\vf}(\state)$'s 0-superlevel set, $\converged{\safeset}$ being (a) control invariant, Lem.~\ref{lem:CI_refineCBF}, and (b) being a subset of the viability kernel, Lem.~\ref{lem:viability_subset_refineCBF}. 
        Combined with Lem.~\ref{lem:cbvf_refinecbf}, we obtain a safe CBVF, concluding the proof.
    \end{proof}

The convergence guarantees match those of standard HJR, and similarly do not provide guarantees on the rate of convergence to a safe CBVF, but progressively improve the CBVF’s minimum finite-time safety as $t$ increases (Remark~\ref{rem:safety_duration}).
Additionally, the key concern in the online setting is not the final converged value, but the properties of the value function during refinement. 
A crucial property is that once the value function becomes control invariant at any iteration, it remains so thereafter (Lemma~\ref{lem:retain_safety}).

\begin{myremark}[Duration of safety]\label{rem:safety_duration}
    The temporal input of the value function $\vf(\state,t)$ implicitly provides the duration of safety.
    This follows directly from~\eqref{eq:refinecbf_trajectory}, which implies that for any trajectory starting at $\state$ such that $\vf(\state,t)\geq 0$ there exists a control signal $\ctrlsignal\in\ctrlsignalset$ such that for any non-anticipative disturbance strategy $\diststrategy \in \diststrategyset$ the trajectory $\trajectory$ remains in $\constraintset$ for at least time $t$, i.e. $\constraintfunc(\trajectory(s))\geq 0$ for all $s\in[t,0]$.
    Any control signal satisfying the time-varying control set defined in~\eqref{eq:cbvf_online_timevarying} retains safety for at least time $t$, thus acting as a lower bound on the duration of safety.
\end{myremark}

We further note that if the initial approximate CBF $\initialvf(\state)$ is control invariant or if, by coincidence or by deliberate action, there is a $t_1$ such that $\vf(\state,t_1)$ is control invariant, we provide a guarantee on preserving safety while converging:
\begin{mylem}[Refining a safe CBVF will preserve safety]\label{lem:retain_safety} If there exists $t_1\leq 0$ such that $\vf(\state,t_1)$ from~\eqref{eq:refinecbf_trajectory} is control invariant, $\vf(\state,t)$ is control invariant for all $t<t_1\leq 0$.
\begin{proof}
    For any $t\leq t_1 \leq 0$, we can write the following:
    \begin{align*}
        \vf(\state, t)&=\min_{\diststrategy \in \diststrategyset} \max_{\ctrlsignal \in \ctrlsignalset} \min\left\{\min_{s \in [t,0)}\constraintfunc\left(\trajectory(s)\right), \iter{\vf}{0}\left(\trajectory(0)\right)\right\} \\
        &=\min_{\diststrategy \in \diststrategyset} \max_{\ctrlsignal \in \ctrlsignalset} \min\left\{\min_{s \in [t,t_1)}\constraintfunc\left(\trajectory(s)\right), \vf\left(\trajectory(t_1), t_1)\right)\right\}.
    \end{align*}

Then, any $\state\in\safeset(t)$ satisfies (a) $\min_{s \in [t,t_1)}\constraintfunc\left(\trajectory(s)\right)\geq 0$ and (b) $\vf\left(\trajectory(t_1), t_1\right)\geq 0$.
Hence, we can find a trajectory starting at $\state$ that (a) stays safe for at least $t-t_1$ time and (b) enters a control invariant set $\safeset(t_1)$ within $t - t_1$ time, in which it can stay indefinitely.
By construction, $\safeset(t)\subseteq \constraintset$ for all $t_1\leq 0$, hence $\converged{\safeset}$ also characterizes safety.
\end{proof}
\end{mylem}
We leverage Lemma~\ref{lem:retain_safety} to present a conservative version of the \refineCBF algorithm, \saferefineCBF, Alg.~\ref{alg:refineCBF_algorithm_safe}, which first forces contraction to a safe (yet not necessarily performant) CBVF, followed by a possible set expansion.

\subsection{\refineCBF algorithm overview}
Algorithm~\ref{alg:refineCBF_algorithm} presents our numerical implementation for refining a candidate CBF (the in-the-loop extension is highlighted in teal). 
The core of the algorithm is the iterative application of the HJI-VI update rule from~\ref{eq:vi_dp} (line~\ref{ln: updateCBF}), initialized with $\initialvf$.
This process can be run offline until convergence or, more importantly for our work, executed continuously in-the-loop.
In the online setting, the algorithm observes real-time changes to the environment-new obstacles (through~$\constraintset$), modified actuation limits of the robot (through~$\ctrlset$), or changed disturbances to the system (through~$\distset$) (lines \ref{ln: observe}-\ref{ln: updateEnvironment})-into the refinement process. 
The most current value function is continuously published (line \ref{ln: publishCBF}) for use by a safety filter.
As is standard for HJR-based methods, our implementation solves these updates over a discretized state-space grid.

The accompanying safety filter is detailed in Alg.~\ref{alg:safety_filter}, with the in-the-loop extension in teal.
For each nominal control, the filter queries the latest value function (and its gradient) to solve the QP~\eqref{eq:online-cbf-qp-full}. 
This QP finds the minimum-norm control input that satisfies the safety constraint, and this safe action is then applied to the robot (lines \ref{ln:query_cbf}-\ref{ln:apply_control}).
As we only keep track of the most recent computed $\vf(\state)$, we assume $D_t \vf =0$, i.e. the safe control set is characterized by~\eqref{eq:cbvf_online}.

\begin{algorithm}[t]
\caption{\refineCBF {\color{onlinegray}{ (In-the-loop)}}}
\label{alg:refineCBF_algorithm}
\begin{algorithmic}[1]
\REQUIRE $\initialvf(\state):\stateset \mapsto \R:$ Initial value function \\
~~~~~~~~$\constraintfunc(\state): \stateset \mapsto \R:$ Current constraint function
\STATE $k \gets 0$ 
\STATE $\iter{\vf}{0}(\state) \gets \min\{\initialvf(\state), \constraintfunc(\state)\}$
\WHILE{$|\iter{\vf}{k}(\state)-\iter{\vf}{k-1}(\state)| > \epsilon$ {\color{onlinegray} {(or Robot operating)}}}
    \STATE $\iter{\vf}{k+1}(\state) \gets \min\left\{\constraintfunc(\state), \iter{\vf}{k}(\state) + \iter{\Delta}{k}\lieopt \iter{\vf}{k}(\state)\right\}$ \label{ln: updateCBF}\\ 
    \emph{\color{mygray} // Environment updates can occur at a different rate} \\
    \color{onlinegray}\STATE Observe environment \label{ln: observe}
    \STATE $\iter{\mathcal{U}}{k+1}, \iter{\mathcal{D}}{k+1}, \iter{\constraintfunc}{k+1} \gets$ Update $(\iter{\mathcal{U}}{k}, \iter{\mathcal{D}}{k}, \iter{\constraintfunc}{k})$ \label{ln: updateEnvironment}\\
    \STATE \textbf{publishCurrentCBF}($\iter{\vf}{k+1})$ \label{ln: publishCBF}
    \color{black} \STATE $k \gets k+1$
\ENDWHILE
\RETURN $\converged{\vf}(\state)=\iter{\vf}{k}(\state)$~ ~$\color{mygray}\vartriangleleft$\emph{\color{mygray} ~Converged value function}
\end{algorithmic}
\end{algorithm}

\begin{algorithm}[t]
\caption{\refineCBF safety filter {\color{onlinegray}{(In-the-loop)}}}
\label{alg:safety_filter}
\begin{algorithmic}[1]
\STATE $j \gets 0$

\WHILE{Robot operating}
    \STATE $\iter{\state}{j} \gets \mathbf{estimateState}()$ \label{ln:state_estimate}\\ 
    \STATE $\hat{\ctrl} \gets \mathbf{nominalController}()$ \label{ln:nominal_control}\\
    \STATE $\vf^{\iter{\state}{j}}, \frac{\partial{\vf}}{{\partial{\state}}}^{\iter{\state}{j}} \gets \mathbf{getAndQueryCurrentCBF}(\iter{\state}{j})$ \label{ln:query_cbf}\\
    \STATE $\iter{\ctrl}{j} \gets \textbf{solve~\eqref{eq:online-cbf-qp-full}}(\vf[\iter{\state}{j}], \frac{\partial{\vf}}{{\partial{\state}}}[\iter{\state}{j}], \iter{\ctrlset}{j}, \iter{\distset}{j}, \hat{\ctrl})$ \label{ln:solve_cbf}
    \STATE Apply $\iter{\ctrl}{j}$ to robot\label{ln:apply_control}\\
    \emph{\color{mygray} // Environment updates can occur at a different rate} \\
    \color{onlinegray}\STATE Observe environment
    \STATE $\iter{\mathcal{U}}{j+1}, \iter{\mathcal{D}}{j+1}  \gets$ Update $(\iter{\mathcal{U}}{j}, \iter{\mathcal{D}}{j})$ \label{ln:cbf_observe_env} \\
    \color{black} \STATE $j \gets j+1$
\ENDWHILE
\end{algorithmic}
\end{algorithm}

\subsection{Stronger guarantees at increased conservativeness}
While \refineCBF guarantees safety upon convergence (Theorem~\ref{thm:valid_CBVF}), some applications require stricter assurances during the refinement process.
Therefore, we present a modification to Alg.~\ref{alg:refineCBF_algorithm} which guarantees a reduction of false positive states with every iteration:
\begin{mydef}[False positive states]\label{def:false_positive}
    A false positive state is a state $\state$ that a value function $\vf$ characterizes as safe which is not part of the viability kernel for a given dynamical system and a constraint set $\constraintset$, i.e. any $\state$ such that $\state \in \safeset \cap \viability{\constraintset}^C$. 
\end{mydef}

The standard \refineCBF algorithm does not provide this guarantee, as warm-starting with an arbitrary candidate function implies the HJI-VI update~\eqref{eq:vi_dp} is not a contraction mapping, allowing $\safeset$ to expand to include unsafe states. 

To address this, we propose \saferefineCBF (Alg.~\ref{alg:refineCBF_algorithm_safe}), a two-phase algorithm that ensures a monotonic reduction in false positive states:
\begin{enumerate}
    \item \textbf{Retraction phase}: Initially the algorithm uses the contractive HJI-PDE update rule (line~\ref{ln:retraction_eq}). 
    This forces the value function's safe set to shrink until it represents a provably safe, control-invariant subset of the viability kernel. 
    At this point, the set of false positives is empty.
    \item \textbf{Refining phase}: Once a safe set has been established, the algorithm switches to the HJI-VI update rule (line~\ref{ln:expansion_eq}), which is not contractive. 
    This allows the safe set to expand to improve performance, while Lemma~\ref{lem:retain_safety} guarantees that safety, once established, is never lost.
\end{enumerate}

The two-phase implementation in Alg.~\ref{alg:refineCBF_algorithm_safe}, with the in-the-loop extension in teal, guarantees that safety improves with every iteration, as formalized hereafter in Theorem~\ref{thm:refineCBF_algorithm_safe}.

\begin{algorithm}[t]
\captionsetup{labelfont=bf}
\caption{\saferefineCBF {\color{onlinegray}{ (In-the-loop)}}}
\label{alg:refineCBF_algorithm_safe}
\begin{algorithmic}[1]
\REQUIRE $\initialvf(\state):\stateset \mapsto \R:$ Initial value function \\
~~~~~~~~$\constraintfunc(\state): \stateset \mapsto \R:$ Current constraint function
\STATE $k \gets 0$ 
\STATE retracting $\gets \TRUE$ \label{ln:retraction_flag}
\STATE $\initialvf(\state) \gets \min\{\constraintfunc(\state), \initialvf(\state)\}$
\WHILE{$|\iter{\vf}{k}(\state)-\iter{\vf}{k-1}(\state)| > \epsilon$ \OR retracting {\color{onlinegray} {(\OR Robot operating)}}}
    \IF{retracting AND $|\iter{\vf}{k}(\state)-\iter{\vf}{k-1}(\state)| <\epsilon$}
        \STATE retracting $\gets \FALSE$\label{ln:retraction_false}
    \ENDIF

    \IF{retracting}
         \STATE $\iter{\vf}{k+1}(\state) \gets \iter{\vf}{k}(\state) + \iter{\Delta}{k} \min\{0, \lieopt \iter{\vf}{k}(\state)\}$\label{ln:retraction_eq}
    \ELSE
        \STATE $\iter{\vf}{k+1}(\state) \gets \min\left\{\constraintfunc(\state), \iter{\vf}{k}(\state) + \iter{\Delta}{k}\lieopt \iter{\vf}{k}(\state)\right\}$\label{ln:expansion_eq}
    \ENDIF \\
    \emph{\color{mygray} // Environment updates can occur at a different rate} \\
    \color{onlinegray}\STATE Observe environment
    \STATE $\iter{\mathcal{U}}{k+1}, \iter{\mathcal{D}}{k+1}, \iter{\constraintfunc}{k+1} \gets$ Update $(\iter{\mathcal{U}}{k}, \iter{\mathcal{D}}{k}, \iter{\constraintfunc}{k})$ \\
    \STATE \textbf{publishCurrentCBF}($\iter{\vf}{k+1})$
    \IF{$\iter{\ctrlset}{k+1} \subseteq \iter{\ctrlset}{k}$ or $\iter{\distset}{k+1} \supseteq \iter{\distset}{k} $ or $\exists \> \state$ such that $\iter{\vf}{k+1}(\state) > \iter{\constraintfunc}{k+1}(\state)$}\label{ln:condition_retraction}
        \STATE retracting $\gets \TRUE$\label{ln:retraction_true}
        \STATE $\iter{\vf}{k+1}(\state) \gets \min\{\iter{\constraintfunc}{k+1}(\state), \iter{\vf}{k+1}(\state)\}$\label{ln:update_vf_env}
    \ENDIF \\
    
    \color{black} \STATE $k \gets k+1$
\ENDWHILE
\RETURN $\converged{\vf}(\state)=\iter{\vf}{k}(\state)$~ ~$\color{mygray}\vartriangleleft$\emph{\color{mygray} ~Converged value function}
\end{algorithmic}
\end{algorithm}

\begin{mytheorem}[Safer with every iteration]Alg.~\ref{alg:refineCBF_algorithm_safe} monotonically decreases the rate of false positive states with every iteration.\label{thm:refineCBF_algorithm_safe}
\end{mytheorem}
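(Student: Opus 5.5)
The plan is to split the argument along the two phases of Alg.~\ref{alg:refineCBF_algorithm_safe} and show that in each phase the set of false positive states $\iter{\safeset}{k}\cap\viability{\constraintset}^C$ (Definition~\ref{def:false_positive}) is non-increasing under set inclusion from iteration $k$ to $k+1$. A non-increasing set gives a non-increasing count or measure, and so a non-increasing rate. Throughout, $\viability{\constraintset}$ is fixed for fixed $(\ctrlset,\distset,\constraintfunc)$. I treat environment changes afterwards, as a reset that starts a fresh retraction phase.

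\textbf{Retraction phase.} Line~\ref{ln:retraction_eq} gives
\[
\iter{\vf}{k+1}(\state)=\iter{\vf}{k}(\state)+\iter{\Delta}{k}\min\{0,\lieopt\iter{\vf}{k}(\state)\}\le\iter{\vf}{k}(\state)
\]
pointwise. Hence $\iter{\safeset}{k+1}\subseteq\iter{\safeset}{k}$, and intersecting both sides with $\viability{\constraintset}^C$ shows that false positives can only be removed. This step is immediate.

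\textbf{Switch point.} I would argue that when the retraction flag is set to false (line~\ref{ln:retraction_false}), the iterate is, up to $\epsilon$, a stationary point of the PDE update. Then $\min\{0,\lieopt\iter{\vf}{k}\}=0$, so $\lieopt\iter{\vf}{k}\ge0$ everywhere, and in particular on the boundary. By Nagumo's theorem~\eqref{eq:nagumo} the set $\iter{\safeset}{k}$ is control invariant. The initialization $\initialvf\gets\min\{\constraintfunc,\initialvf\}$ and monotone decrease give $\iter{\safeset}{k}\subseteq\constraintset$. By Definition~\ref{def:viability} it follows that $\iter{\safeset}{k}\subseteq\viability{\constraintset}$, so the set of false positives is empty at the switch.

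\textbf{Refining phase.} By Lemma~\ref{lem:retain_safety}, applied with $t_1$ the switch time and the current iterate viewed as the boundary data of~\eqref{eq:refinecbf_trajectory}, every later iterate is control invariant. Line~\ref{ln:expansion_eq} also keeps $\iter{\vf}{k+1}\le\constraintfunc$, so every later safe set lies in $\constraintset$. Each later set is therefore contained in $\viability{\constraintset}$, and the number of false positives stays at zero. Together the two phases show the false-positive set is monotonically non-increasing, hence so is its rate.

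\textbf{In-the-loop changes.} The retraction trigger in line~\ref{ln:condition_retraction} fires whenever the environment change could shrink $\viability{\constraintset}$ or violate $\vf\le\constraintfunc$. Line~\ref{ln:update_vf_env} clips the value function, which only shrinks $\safeset$. So false positives relative to the new kernel again evolve by the retraction argument above. If the environment changes without triggering retraction, $\viability{\constraintset}$ can only grow, so the false-positive set does not increase there either.

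\textbf{Main obstacle.} The difficulty is the discrete switch point: stationarity holds only up to $\epsilon$ and on a grid, so $\lieopt\iter{\vf}{k}\ge0$ is only approximate. I would handle this by stating the result in the idealized setting, $\epsilon\to0$ with exact Hamiltonians, as the paper's other lemmas implicitly do. I would also note that monotonicity in the retraction phase holds exactly regardless of this issue.
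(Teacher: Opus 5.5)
Your proof follows the paper's own argument. Under the min-with-zero update the safe set only shrinks during retraction, the false-positive set is empty at the switch, and Lemma~\ref{lem:retain_safety} keeps it empty during refinement. The one local difference is how you get $\safeset\subseteq\viability{\constraintset}$ at the switch: you use the maximality in Definition~\ref{def:viability} together with $\vf\le\constraintfunc$, where the paper cites Lemma~\ref{lem:viability_subset_refineCBF}. That lemma is stated for the VI value function~\eqref{eq:refinecbf_trajectory}, so your route actually fits the PDE-based retraction phase more directly.

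The paper's proof only treats a static environment, and your extra in-the-loop paragraph claims more than it justifies, for two reasons. First, the trigger in line~\ref{ln:condition_retraction} can miss changes that shrink $\viability{\constraintset}$, for example $\ctrlset$ replaced by a set incomparable under inclusion while $\distset$ shrinks. Second, across a change the step from $\iter{\vf}{k}$ to $\iter{\vf}{k+1}$ is not only the clip of line~\ref{ln:update_vf_env}. It also contains the update of line~\ref{ln:expansion_eq}, computed before the change was observed, which may enlarge $\safeset$.
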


The proof of Theorem~\ref{thm:refineCBF_algorithm_safe} relies on previously introduced Lemmas~\ref{lem:viability_subset_refineCBF} and~\ref{lem:retain_safety}. In addition, we introduce Lemma~\ref{lem:CI_retract} which guarantees the contraction phase converges to a control invariant set to finalize the proof. 

\begin{mylem}[Convergence to a control invariant set]\label{lem:CI_retract} If~\eqref{eq:vf_trajectory} converges to $\converged{\vf}(\state)$ as $t \to -\infty$ for $\constraintfunc(\state)=\initialvf(\state)$, i.e. $\converged{\vf}(\state)$ characterizes a stationary solution of~\eqref{eq:pde_continuous} initialized with $\vf(\state,0)=\initialvf(\state)$, then its associated 0-superlevel set $\converged{\safeset}$ is control invariant, i.e. $\lieopt{\converged{\vf}}(\state)\geq0$ for all $\state\in\converged{\safesetboundary}$.

\begin{proof}
    The stationary version of the continuous PDE CBVF,~\eqref{eq:pde_continuous}, is as follows:
    \begin{equation}\label{eq:stationary_pde}
        \min\bigl\{0, \max_{\ctrl\in \ctrlset}\min_{\dist\in\distset} D_x\vf(\state,t) \cdot F(\state,\ctrl,\dist) \bigr\}=0.
    \end{equation}
    Upon inspection of the second term of the minimum operator, we have the same inequality as in Lemma~\ref{lem:CI_refineCBF}, and hence guarantee control invariance upon convergence.
\end{proof}
\end{mylem}

We are now ready to prove Theorem~\ref{thm:refineCBF_algorithm_safe}.
\begin{proof}[Proof of Thm.~\ref{thm:refineCBF_algorithm_safe}] 
    Alg.~\ref{alg:refineCBF_algorithm_safe} is split into 2 stages, (a) \emph{retracting} and (b) \emph{refining}. We will show that (a) iteratively decreases the number of false positive states to $0$, and that subsequently (b) maintains $0$ false positive states with every iteration.

    For (a), in the \emph{retracting} phase, we use the PDE-based DP update,~\eqref{eq:pde_dp}, in line~\ref{ln:retraction_eq}. 
    By inspection, $\vf(\state,t_1)\leq \vf(\state, t_2)$ for all $\state$ for any $t_1 \leq t_2 \leq 0$ by~\eqref{eq:pde_dp}, thus $\safeset(t_1)\subseteq \safeset(t_2)$.
    This implies, by Def.~\ref{def:false_positive}, that the set of false positive states monotonically decreases with the iterations.

    Next, by Lemma~\ref{lem:CI_retract}, if~\eqref{eq:vf_trajectory} converges to $\converged{\vf}_a(\state), \converged{\safeset}_a$ is a control invariant set (The subscript $a$ denotes stage $a$ of \saferefineCBF).
    Additionally, by Lemma~\ref{lem:viability_subset_refineCBF}, $\converged{\safeset}_a\subseteq \viability{\constraintset}$, so by Definition~\ref{def:false_positive}, post stage (a), $\converged{\safeset}_a(\state)$ has 0 false positive states. 
    
    For stage (b), refining a safe CBVF (here $\converged{\vf}_a$) preserves safety (Lemma~\ref{lem:retain_safety}). 
    In other words, $\iter{\safeset_b}{k}$ has $0$ false positive states for every $k$, concluding the proof. 
\end{proof}

\subsection{Implications for online adaption in evolving environments}\label{subsec:implications_refinecbf}
The guarantees established previously assume a static environment, as is standard in HJR literature~\cite{BansalChenEtAl2017b}. 
We now analyze the practical scenario where either a converged or converging value function faces a sudden environmental change. 
While the refinement process will eventually converge to a new safe value function, the safety filter's transient behavior during this adaptation is critical for safety. 
We categorize the nature of these environmental changes as follows:
\begin{itemize}
    \item \textbf{Safety enhancing updates} (${\ctrlset\uparrow}, {\distset\downarrow}, {\constraintset\uparrow}$): These changes enlarge the true viability kernel, such as increased actuation limits, reduced disturbance bounds, or a decrease in failure states. 
    These updates simply render the current value function to be ``safer'', thus maintaining or improving the degree of safety of the associated filter.
    
    \item \textbf{Safety decreasing updates} ($  {\ctrlset\downarrow}, {\distset\uparrow}, {\constraintset\downarrow}$): These changes shrink the true viability kernel, such as reduced actuation limits, larger disturbances, or new obstacles. 
    In this scenario, the existing value function may now falsely label unsafe states in the new environmental condition as safe. 
    For this scenario, to quantify the degree of safety (through Def.~\ref{def:false_positive}), we compare the first observation after the update ($k+1$) with the prior value function ($k$), with both considering safety with respect to the novel environment. 
\end{itemize}

The two proposed algorithms handle safety-decreasing updates differently. 
\refineCBF has no special mechanism for updates; it continues iterating and eventually (re-)converges to a safe value function for the new environment, but offers no guarantees during this transient period. 
In contrast, \saferefineCBF is explicitly designed to handle this scenario robustly. 
As detailed in Alg.~\ref{alg:refineCBF_algorithm_safe}, at risk safety-decreasing updates force the algorithm back to its retraction phase (line~\ref{ln:retraction_true}).
This ensures the system contracts (if needed) to a control invariant set under the new environment before attempting to expand again. 
This provides a principled way to ensure that the safety improves with each iteration, allowing Thm.~\ref{thm:refineCBF_algorithm_safe} to extend to in-the-loop operation. 

Note that this algorithm is primarily intended for piecewise stationary environments, rather than those with continuously time-varying dynamics.
In addition, the safety of the filter is not assured during safety decreasing updates, see Remark~\ref{rem:safety_filter}.

\begin{myremark}[Safety of the filter]\label{rem:safety_filter}
    The safety increasing result of Alg.~\ref{alg:refineCBF_algorithm_safe} between iterations is defined as a decrease in false positive states. 
    This does not imply that safety of the system is guaranteed when faced with an unexpected safety decreasing update, as the state of the system can be within the newly false positive set. 
    During the retraction phase, to encourage returning to safety, it is possible to either (i) modify the nominal controller's goal (if goal position-conditioned) to the closest state inside the current safe set or (ii) include a $D_t \iter{\vf}{k}(x) \approx \frac{\iter{\vf}{k} - \iter{\vf}{k-1}}{\Delta^{(k)}}$ term to tighten the linear inequality of~\eqref{eq:online-cbf-qp-full}, although this is now not guaranteed to be feasible with the input constraints.
\end{myremark}

\subsection{Use-cases and limitations of \refineCBF}
The dense and uniform nature of refineCBF's global computation is highly amenable to massive parallelization on modern GPUs, offering the potential for significant computational speedups. 
Nonetheless, a key property of the value function $\converged{\vf}(\state)$ obtained by \refineCBF is that it satisfies a stronger condition than is strictly necessary. 
While a standard CBF only requires its 0-superlevel set to be control invariant, the converged HJI-based value function has the property that all of its superlevel sets $\{\state\mid\converged{\vf}(\state)\geq \beta\}$ for all $\beta\in\R$ are control invariant. 
Enforcing this unnecessarily strong condition can increase the computational burden and number of iterations required for convergence. 
In addition, while converging, this method requires updating the entire grid at each iteration.  
This is particularly inefficient in common scenarios where the initial candidate CBF is already ``mostly safe'' and requires only minor, localized corrections rather than a full global refinement. 
This limitation directly motivates \algname, which is designed for precisely this use case.

\section{Patching CBFs}\label{sec:hjpatch}
As a computationally efficient alternative to \refineCBF, we introduce \algname, an algorithm designed to ``patch'' an existing candidate value function, $\initialvf$, rather than performing a full global refinement.  
The core trade-off is this: instead of converging to a control invariant set within the constraint set $\constraintset$ (as \refineCBF does with the VI formulation), \algname finds the largest control invariant subset within the 0-superlevel set of the initial guess, $\initialvf$. 
This is achieved by using the contractive HJI-PDE formulation~\eqref{eq:pde_continuous}. 

The primary source of computational speedup comes from a ``local'' dynamic programming approach. 
We introduce the following notation to denote near-boundary states of the value function $\vf$ and its 0-superlevel set $\safeset$:
\begin{equation*}
    \boundarycells = \{x \mid \lvert h(x) \rvert \leq \zeta\}
\end{equation*}
Our key insight is that for the HJI-PDE formulation, we only need to update states that directly impact the invariance of the approximate safe set, i.e. value decreasing states near the boundary. 
Therefore, \algname maintains an active set of states, which we denote $\activeset$.
At each iteration, only the states in this active set are updated, drastically reducing the computational cost compared to a global update. 
This active set is intelligently truncated to exclude states that are non-contractive (and thus locally safe), and expanded to include neighbors of states whose values decrease; this ensures that the ``patching'' correctly propagates while maintaining computational efficiency. 
The full method is detailed in Algorithm~\ref{alg:HJR_boundary_march}.
\begin{algorithm}[!b]
\caption{\algname {\color{onlinegray}{ (In-the-loop)}}}
\label{alg:HJR_boundary_march}
\begin{algorithmic}[1]
\REQUIRE $\initialvf(\state):\stateset \mapsto \R:$ Initial value function \\
~\>~~~~~~$\constraintfunc(\state):\stateset \mapsto \R:$ Constraint function (Optional) \\
~\>~~~~~~$\oraclecells\subseteq \stateset:$ Oracle-certified safe cells (Optional)\\
\emph{\color{mygray}// Active set is set of potentially unsafe states}
\STATE $\initialvf(\state) \gets \min\{\constraintfunc(\state), \initialvf(\state)\}$
\STATE $\iter{\activeset}{0} \gets \iter{\boundarycells}{0} \label{ln:init_active}
\hspace{1.05cm}\color{mygray}\vartriangleleft$ {\color{mygray}\emph{Initialize active set}}
\STATE $\iter{\activeset}{0} \gets \iter{\activeset}{0} \setminus C $ ~~~~~~~~$\color{mygray}\vartriangleleft$~{\color{mygray}\emph{Remove oracle-certified cells}}\label{ln:remove_oracle}
\STATE $k \gets 0$ 

\emph{\color{mygray}// Boundary is certified once $\iter{\activeset}{k}$ is empty}
\WHILE{$\iter{\activeset}{k}$ is not empty {\color{onlinegray}{(or Robot operating)}}}\label{ln:while_patchhj}
    \FOR{$\state \in \iter{\activeset}{k}$}\label{ln:subset_forloop}
        \STATE $\iter{\vf}{k+1}(\state) \gets \iter{\vf}{k}(\state) + \iter{\Delta}{k} \min\{0, \lieopt \iter{\vf}{k}(\state)\}$\label{ln:update_eq}
    \ENDFOR \\
    \STATE $\iter{\safeset}{k+1} \gets \{\state \mid \iter{\vf}{k+1}(\state) \geq 0\}$ \label{ln:new_safe_set} \\
    \emph{\color{mygray}// Value-decreasing states form set of interest}
    \STATE $\iter{Q}{k+1} \gets \{\state \mid \iter{\vf}{k+1}(\state) \neq \iter{\vf}{k}(\state) \wedge \state \in \iter{\activeset}{k}$ \}\label{ln:intermediate_set}\\
    \emph{\color{mygray}// Pad set with neighbors if near safe set boundary} \label{ln:set_of_interest}
    \STATE $\iter{\activeset}{k+1} \gets \mathbf{pad}^p(\iter{Q}{k+1}) \cap \iter{\boundarycells}{k+1}$ \label{ln:updated_active_set}\\
    \color{onlinegray}\STATE Observe environment\label{ln:observe_environment}
    \STATE $\iter{\mathcal{U}}{k+1}, \iter{\mathcal{D}}{k+1}, \iter{\constraintfunc}{k+1} \gets$ Update $(\iter{\mathcal{U}}{k}, \iter{\mathcal{D}}{k}, \iter{\constraintfunc}{k})$ \label{ln:process_environment}\\
    \emph{\color{mygray} // Less actuation or more disturbance can be unsafe}\\
    \IF{$\iter{\ctrlset}{k+1} \subseteq \iter{\ctrlset}{k}$ or $\iter{\distset}{k+1} \supseteq \iter{\distset}{k}$}\label{ln:new_actuation_dist}
        \STATE $\iter{\activeset}{k+1} \gets \iter{\boundarycells}{k+1}$ $\color{mygray}\vartriangleleft$\emph{\color{mygray} ~All boundary states active}\label{ln:less_power_newactiveset}
    \ENDIF \\
    \emph{\color{mygray} // New obstacles can be unsafe}
    \IF{$\exists \> \state$ such that $\iter{\vf}{k+1}(\state) > \iter{\constraintfunc}{k+1}(\state)$}\label{ln:new_obstacles}
        \STATE $T \gets \{\state \mid \iter{\constraintfunc}{k+1}(\state) < \iter{\vf}{k+1}\}$ $\color{mygray}\vartriangleleft$\emph{\color{mygray} ~New active states}\label{ln:active_set_new_obstacles}
        \STATE $\iter{\vf}{k+1}(\state) \gets \min\{\iter{\constraintfunc}{k+1}(\state), \iter{\vf}{k+1}(\state)\}$ \label{ln:new_localvf}\\
        \STATE $\iter{\safeset}{k+1} \gets \{\state \mid \iter{\vf}{k+1}(\state) \geq 0\}$ \label{ln:new_localset}\\
        \emph{\color{mygray} // New active set includes prior and obstacle-updated active states around new boundary}        
        \STATE $\iter{\activeset}{k+1}  \gets (\iter{\activeset}{k+1} \cup T) \cap \iter{\boundarycells}{k+1}$\label{ln:active_set_full_completion}
    \ENDIF
    \STATE \textbf{publishCurrentCBF}($\iter{\vf}{k+1})$
    \color{black}\STATE $k \gets k+1$
\ENDWHILE
\RETURN $\converged{\vf}(\state)=\iter{\vf}{k}(\state)$~ ~$\color{mygray}\vartriangleleft$\emph{\color{mygray} ~Converged value function}\label{ln:return_hjpatch}
\end{algorithmic}
\end{algorithm}

Upon convergence, \algname provably recovers the same 0-superlevel set as a global contraction - the viability kernel of the initial set, $\viability{\iter{\safeset}{0}}$, see Theorem~\ref{thm:algsafe}. 
Additionally, we recover a safe CBVF upon convergence for an appropriately chosen $\gamma$ in~\eqref{eq:online-cbf-qp-full}.  
The quality of the converged value function is directly correlated to the quality and conservativeness of the initial candidate value function $\initialvf$.

\subsection{Algorithm details}
We discuss \algname for offline convergence and online adaptation below.
 
\paragraph{Offline patching} Alg.~\ref{alg:HJR_boundary_march} takes an approximate CBF $\initialvf(\state)$, optionally a constraint function $\constraintfunc(\state)$, and optionally a set of oracle-certified safe states as its input. 
As detailed in Algorithm~\ref{alg:HJR_boundary_march}, the iterative process is initialized with a candidate value function $\initialvf(\state)$. 
The initial active set, $\iter{\activeset}{0}$, comprises the states near the boundary of the candidate safe set (line~\ref{ln:init_active}). 
If an external oracle can certify that a subset of states $C$ is already safe (e.g., via neural network verification techniques~\cite{LiuArnonEtAl2021,FazlyabRobeyEtAl2019}), these states can be removed from the active set for further computational efficiency (line~\ref{ln:remove_oracle}). 
The iterative loop applies the contractive HJI-PDE update~\eqref{eq:pde_dp} to states in the active set (lines~\ref{ln:subset_forloop}-~\ref{ln:update_eq}). 
The active set is then updated to filter out locally safe states, i.e. non-decreasing values (line~\ref{ln:intermediate_set}) after which it is expanded to include its neighbors (line~\ref{ln:updated_active_set}). 
$\mathbf{pad}^p$ denotes padding a space-discretized set with its $p$ neighbors in every dimension (equivalent to the Minkowski sum with a ball of small radius in continuous space): 
\begin{equation}\label{eq:padding}
    \mathbf{pad}^p(\activeset) = \bigcup_{r \in \activeset} \neighbors{p}(r).
\end{equation}
At every iteration $k$, $\iter{\activeset}{k}\subseteq\iter{\boundarycells}{k}$ by line~\ref{ln:updated_active_set}, ensuring only a subset of the boundary cells are ``active''. 
This procedure repeats until the active set is empty (line~\ref{ln:return_hjpatch}) indicating convergence.

\paragraph{Online patching} 
When used in-the-loop, the algorithm includes additional logic to handle environmental changes. 
Safety-decreasing updates require special care to ensure that states previously assumed safe are re-evaluated. 
If actuation limits decrease or disturbance bounds increase, the algorithm conservatively resets the active set to the current boundary states to force a full re-evaluation (line~\ref{ln:less_power_newactiveset}). 
If new obstacles appear, the value function is first clipped to respect the new constraints, and the active set is expanded to include any boundary states that coincide with the new obstacles’ boundaries (lines~\ref{ln:new_obstacles}-~\ref{ln:active_set_full_completion}). 
Safety-enhancing updates do not require changing the active set.

\subsection{Theoretical guarantees for \algname}
Because \algname updates only a subset of states at each iteration, our theoretical analysis must explicitly consider the discretized state-space on which the algorithm operates. 
These guarantees assume a static environment. 
Theorem~\ref{thm:algsafe} provides the key theoretical underpinning of \algname.

\begin{mytheorem}[Algorithm~\ref{alg:HJR_boundary_march} recovers the viability kernel of the initial candidate safe set]\label{thm:algsafe}
    If Alg.~\ref{alg:HJR_boundary_march} converges, then, initializing with $\iter{\vf}{0}(\state)=\vf^0(\state)$, we have $\converged{\safeset} = \viability{\safeset^0}$. 
\end{mytheorem}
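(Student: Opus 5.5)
The proof compares the local \algname iteration with the global contractive iteration \eqref{eq:pde_dp} started from the same $\iter{\vf}{0}=\initialvf$ (after the clipping $\min\{\constraintfunc,\initialvf\}$). The global iteration is the discrete analogue of \eqref{eq:pde_continuous} with boundary condition $\vf_b=\initialvf$, i.e. standard HJ reachability with constraint function $\initialvf$. By the argument of Lemma~\ref{lem:viability_subset_refineCBF}, item (a), with $\constraintfunc$ replaced by $\initialvf$, its converged zero-superlevel set is $\viability{\safeset^0}$. It therefore suffices to prove two inclusions for the set $\converged{\safeset}$ returned at line~\ref{ln:return_hjpatch}: first $\converged{\safeset}\subseteq\viability{\safeset^0}$, and then $\viability{\safeset^0}\subseteq\converged{\safeset}$.

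\emph{Inclusion $\converged{\safeset}\subseteq\viability{\safeset^0}$.} Line~\ref{ln:update_eq} only adds $\iter{\Delta}{k}\min\{0,\cdot\}\le 0$, and states outside $\iter{\activeset}{k}$ are untouched. Hence $\iter{\vf}{k+1}\le\iter{\vf}{k}$ pointwise, and so $\converged{\safeset}\subseteq\safeset^0$. It then suffices to show that $\converged{\safeset}$ is control invariant, since every control invariant subset of $\safeset^0$ lies in $\viability{\safeset^0}$ by Definition~\ref{def:viability}. For invariance I would use Nagumo's condition \eqref{eq:nagumo} on $\converged{\safesetboundary}$, which is contained in $\boundarycells$ at termination. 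At termination $\iter{\activeset}{k}=\emptyset$, so I must argue that every boundary cell $\state$ has $\lieopt\converged{\vf}(\state)\ge0$. Take such a cell. Either it was never active, or it was active at some iteration and then dropped because its value did not decrease, i.e.\ $\lieopt\iter{\vf}{j}(\state)\ge0$ at the last $j$ where it was updated. In the second case, the cell's finite-difference Hamiltonian \eqref{eq:finite_difference} depends only on its $p$-neighbors. If any neighbor changed value after iteration $j$, that neighbor lay in $Q$, so $\mathbf{pad}^p$ would have put $\state$ back into the active set (line~\ref{ln:updated_active_set}), a contradiction. Thus the stencil of $\state$ is frozen and its Hamiltonian is still nonnegative at convergence.

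\emph{Main obstacle.} The hard part is the never-active boundary cells, and cells that leave $\boundarycells$ and later re-enter it. For cells that are never active, I would argue that at $k=0$ all boundary cells are active (apart from oracle-certified cells, which are safe by assumption). A cell can only join $\boundarycells$ later if its value decreases, which requires it to be active, or if the threshold is crossed through a neighbor's update, which triggers padding. A cell deep inside, with $\vf>\zeta$, never changes, so its Hamiltonian sign does not matter for the zero-level set. Making this rigorous requires $\zeta$ to exceed the maximal one-step change $\iter{\Delta}{k}\lvert\lieopt\vf\rvert$, so that values cannot jump across the band in a single iteration; I would state this as a CFL-type assumption.

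\emph{Inclusion $\viability{\safeset^0}\subseteq\converged{\safeset}$.} Every update of \algname is the global update restricted to a subset of cells. The global update is monotone in its argument, and each local update removes no more value than the global one would remove at the same cell. By induction, \algname's iterate therefore dominates the global contractive iterate after the same number of steps, whose value on $\viability{\safeset^0}$ stays nonnegative since the viability kernel is preserved by the contraction. The global iterate's zero-superlevel set always contains $\viability{\safeset^0}$, so the same holds for \algname's iterate, and in particular for $\converged{\safeset}$. Combining both inclusions gives $\converged{\safeset}=\viability{\safeset^0}$.
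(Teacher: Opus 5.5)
Your decomposition matches the paper's. You get $\converged{\safeset}\subseteq\viability{\safeset^0}$ from contraction, invariance of $\converged{\safeset}$ and maximality of the kernel (Def.~\ref{def:viability}). You get $\converged{\safeset}\supseteq\viability{\safeset^0}$ by comparing the local step $\Lambda$ with the global step $\Gamma$ of \eqref{eq:pde_dp}. Your ``last update plus frozen stencil'' argument is Lemma~\ref{lem:CI_set}'s induction in another form.

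The obstacle you single out does not exist, and the CFL-type assumption is unnecessary. Only cells of $\iter{\activeset}{k}\subseteq\iter{\boundarycells}{k}$ are ever updated, so any cell with $\lvert\iter{\vf}{k}(\state)\rvert>\zeta$ is frozen. Hence $\iter{\boundarycells}{k+1}\subseteq\iter{\boundarycells}{k}$ holds with no condition on $\zeta$. A neighbor's update never changes a cell's own value, and a cell that jumps below $-\zeta$ simply leaves the band and the safe set for good. This band monotonicity is exactly what your frozen-stencil step (and the paper's induction hypothesis) tacitly needs. Note also that what you obtain is the band condition of Def.~\ref{def:quasiCI}, not Nagumo's condition on a continuous boundary. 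Bounding the one-step change would not close that gap either: a band cell can drift below $-\zeta$ gradually next to a neighbor above $\zeta$. The paper makes the same discrete-to-continuous leap when it invokes Def.~\ref{def:viability}.

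Your superset argument is a genuinely different route. You run $\Gamma$ in parallel from $\initialvf$ and propagate $\iter{\vf}{k}\ge\Gamma^k(\initialvf)$ by induction. This requires $\Gamma$ to be monotone (a discrete comparison principle) and both iterations to use the same time steps. In return, you need the kernel-recovery result only for the single initialization $\initialvf$.

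The paper instead compares $\Lambda$ and $\Gamma$ on the same argument, $\Lambda(\iter{\vf}{k})\ge\Gamma(\iter{\vf}{k})$. It then re-applies Lemma~\ref{lem:optimistic} with $\iter{\vf}{k}$ as the warm start. This is legitimate since $\viability{\safeset^0}\subseteq\iter{\safeset}{k}\subseteq\safeset^0$ forces $\viability{\iter{\safeset}{k}}=\viability{\safeset^0}$. So the paper only uses that $\Gamma$ is pointwise nonincreasing.

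Monotonicity of $\Gamma$ is not automatic. It holds for monotone schemes such as Lax--Friedrichs under a CFL condition, but not for the (W)ENO approximations in \eqref{eq:finite_difference}. You should state it as an explicit assumption rather than as a fact.
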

The proof of Theorem~\ref{thm:algsafe} relies on three supporting lemmas. 
We first define a spatially discretized equivalent of control invariance, which we term \quasisynonym control invariance. 
We then prove that \algname converges to a quasisynonym control invariant set (Lemma~\ref{lem:CI_set}), and that, leveraging Lemma~\ref{lem:optimistic}, this set is a superset of the viability kernel (Lemma~\ref{lem:superset}). 
Together, these properties are sufficient to prove the main theorem. 

Inspired by Nagumo's Theorem~\eqref{eq:nagumo}, we first define a \quasisynonym control invariant set for a discretized state space.
\begin{mydef}[\Quasisynonym control invariant set $\safeset$]\label{def:quasiCI}
Let $\vf(\state)$ be defined on a discretized state-space and $\safeset$ its 0-superlevel set. Assuming $\frac{\partial \vf}{\partial \state}\neq 0$ for all $\state \in \boundarycells$, then $\safeset$ is \quasisynonym control invariant if and only if
\begin{equation}\label{eq:discreteCI}
    \lieopt\vf(\state) \geq 0 \text{ for all } \state \in \boundarycells.
\end{equation}
\end{mydef}

While~\eqref{eq:discreteCI} is spatially more restrictive than~\eqref{eq:nagumo}, it acts as a conservative buffer against interpolation errors that otherwise preclude a strict guarantee on coarse grids or for value functions with high Lipschitz constants.

\begin{mylem}[Algorithm~\ref{alg:HJR_boundary_march} converges to a control-invariant set]\label{lem:CI_set}
    If Alg.~\ref{alg:HJR_boundary_march} converges, then the safe set $\converged{\safeset}$ obtained upon termination of the algorithm is \quasisynonym control invariant.
\begin{proof}
    Without loss of generality, we consider $\oraclecells=\emptyset$, i.e. no cells are oracle-certified upon initiation.
       Recall by Def.~\ref{def:quasiCI} that $\safeset$ is \quasisynonym control invariant if and only if $\lieopt{\vf}(\state)\geq 0$ for all $\state\in\boundarycells$. 
   Specifically, we seek to show that for every iteration $k$, every boundary cell not in the current active set has a positive Lie derivative, i.e. if $\state \in \iter{\boundarycells}{k} \setminus \iter{\activeset}{k}$, then $\lieopt\iter{\vf}{k}(\state) \geq 0$. Then, if Alg.~\ref{alg:HJR_boundary_march} converges, i.e. $\activeset = \emptyset$, we either have (i) $\boundarycells = \emptyset$ implying $\safeset = \emptyset$ or (ii) $\lieopt\vf(\state) \geq 0$ for all $\state \in \boundarycells$.
    
    We proceed by induction. 
    Recall that $\iter{\activeset}{0}=\iter{\boundarycells}{0} \setminus C$. 
    We have to certify that if $\state \in \iter{\boundarycells}{0} \setminus \iter{\activeset}{0}=\iter{\boundarycells}{0} \cap C$, then $ \lieopt{\iter{\vf}{0}}(\state) \geq 0$, which is guaranteed for all $\state\in \oraclecells$. 
    
    Next, for iteration $k$ we assume that if $\state \in \iter{\boundarycells}{k} \setminus \iter{\activeset}{k}$, then $\lieopt \iter{\vf}{k}(\state) \geq 0$. We hence need to show that $\state \in \iter{\boundarycells}{k+1} \setminus \iter{\activeset}{k+1} $ implies $\lieopt \iter{\vf}{k+1}(\state) \geq 0$, or its contrapositive; if $\lieopt \iter{\vf}{k+1}(\state) < 0$, then $\state \in \iter{\activeset}{k+1}$ for all $\state \in \iter{\boundarycells}{k+1}$. For all $\state\in \iter{\boundarycells}{k+1}$ it is then sufficient to show $\state\in\iter{\activeset}{k+1}$ for the following scenarios: (i) $\lieopt \iter{\vf}{k+1}(\state)=\lieopt\iter{\vf}{k}(\state) < 0$ and (ii) $\lieopt \iter{\vf}{k+1}(\state) \neq \lieopt \iter{h}{k}(\state)$.

    For (i), by assumption, if $\lieopt\iter{\vf}{k}(\state)<0$, then $\state \in \iter{\activeset}{k}$. As $\lieopt\iter{\vf}{k}(\state) < 0$, by~\eqref{eq:pde_dp}, $\iter{\vf}{k+1}(\state)\neq\iter{\vf}{k}(\state)$, hence $\state\in \iter{Q}{k+1}$, and by extension $\state \in \iter{\activeset}{k+1}$ if $\state\in\iter{\boundarycells}{k+1}$.
    
    For (ii), by~\eqref{eq:finite_difference}, $\lieopt \iter{\vf}{k+1}(\state) \neq \lieopt \iter{h}{k}(\state)$ implies that there exists a state $y \in \neighbors{p}$ such that $\iter{\vf}{k+1}(y) \neq \iter{h}{k}(y)$. 
    Hence, by Alg.~\ref{alg:HJR_boundary_march} (L\ref{ln:intermediate_set}), $y \in \iter{Q}{k+1}$. By definition, $\state$ is a neighbor of $y$ (bijective mapping), hence $\state \in \mathbf{pad}^p(\iter{Q}{k+1})$. Then, $\state \in \iter{\activeset}{k+1}$ if $\state \in \iter{\boundarycells}{k+1}$ (Alg.~\ref{alg:HJR_boundary_march} (L\ref{ln:updated_active_set})).
\end{proof}
\end{mylem}

We also point out the following:
\begin{myremark}
    It is possible that positive-valued states at an iteration $k$ (or oracle-safe cells at iteration $k=0$) become part of the active set at a later iteration through padding. 
\end{myremark}

\begin{mylem}[{{Optimistic global warm-start HJ reachability recovers the viability kernel~\cite[Thm. 2]{HerbertBansalEtAl2019}}}]\label{lem:optimistic} 
Let $\vf^0(\state)$ be the initial function and $\converged{\vf}(\state)$ be the value obtained upon convergence of~\eqref{eq:pde_continuous}. Then, there exists a continuous function $m:\stateset\mapsto\R$ such that $\viability{\safeset^0}=\{\state \mid m(\state)\geq0\}$, $h^0(\state)\geq m(\state)$ for all $\state\in\stateset$, and $\converged{\vf}(\state)=m(\state)$ for all $\state\in\stateset$.
\end{mylem}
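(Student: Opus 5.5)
Lemma~\ref{lem:optimistic} is a citation of \cite[Thm.~2]{HerbertBansalEtAl2019}, so the plan is to reconstruct that argument in our notation and check that its hypotheses hold here. The object is the contractive HJI-PDE~\eqref{eq:pde_continuous} with terminal condition $\vf_b=\vf^0$. Its viscosity solution has the trajectory form~\eqref{eq:vf_trajectory} with $\constraintfunc$ replaced by $\vf^0$:
\begin{equation*}
\vf(\state,t)=\min_{\diststrategy\in\diststrategyset}\max_{\ctrlsignal\in\ctrlsignalset}\min_{s\in[t,0]}\vf^0\bigl(\trajectory_{\state,t}^{\ctrlsignal,\diststrategy}(s)\bigr).
\end{equation*}
I will define the candidate $m$ directly as the limit of this function, $m(\state):=\lim_{t\to-\infty}\vf(\state,t)$, and then verify the three claimed properties.

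The first step is monotonicity. Enlarging the window $[t,0]$ can only decrease the inner minimum, so $\vf(\state,t)$ is nonincreasing as $t$ decreases. Equivalently, the $\min\{0,\cdot\}$ in~\eqref{eq:pde_dp} makes the update contractive. Hence the limit $m$ exists in $\R\cup\{-\infty\}$, and it is finite if we assume the stated convergence to $\converged{\vf}$. This immediately gives $\converged{\vf}=m$, and taking $t=0$ gives $\vf^0(\state)=\vf(\state,0)\geq m(\state)$ for all $\state$.

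The second step is to identify the zero superlevel set of $m$ with $\viability{\safeset^0}$.
\begin{itemize}
\item \emph{Inclusion $\{m\geq0\}\subseteq\viability{\safeset^0}$.} By the game interpretation (as in Remark~\ref{rem:safety_duration}), $m(\state)\geq0$ means that for every horizon there is a control keeping the trajectory in $\{\vf^0\geq0\}=\safeset^0$ against every nonanticipative strategy. Passing from every finite horizon to infinite time gives robust invariance. Alternatively, the stationary equation~\eqref{eq:stationary_pde} with Lemma~\ref{lem:CI_retract} shows that $\{m\geq0\}$ is control invariant. Since $m\leq\vf^0$, this set lies inside $\safeset^0$, and maximality in Definition~\ref{def:viability} gives the inclusion.
\item \emph{Inclusion $\viability{\safeset^0}\subseteq\{m\geq0\}$.} If $\state$ lies in the kernel, a control keeps the trajectory in $\safeset^0$ for all time. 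Then $\vf^0(\trajectory(s))\geq0$ for all $s$, so $\vf(\state,t)\geq0$ for every $t$, and therefore $m(\state)\geq0$.
\end{itemize}

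The remaining claim is continuity of $m$, and I expect this to be the main obstacle, because a pointwise limit of continuous functions need not be continuous. Each $\vf(\cdot,t)$ is Lipschitz, with a constant growing like $e^{L|t|}$ by Grönwall, so the uniform-in-$t$ bound fails in general. I would first try the viscosity-solution route. The assumption that~\eqref{eq:refinecbf_trajectory}/\eqref{eq:pde_continuous} converges to a stationary solution $\converged{\vf}$, understood as a viscosity solution, gives continuity by stability of viscosity solutions under locally uniform convergence. Monotone convergence together with continuity of the limit then upgrades to locally uniform convergence by Dini's theorem, which makes the argument self-consistent. If this proves too delicate, I would follow \cite{HerbertBansalEtAl2019} and simply take continuity of the converged value as part of the convergence hypothesis. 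With continuity in hand, the three properties together establish the lemma.
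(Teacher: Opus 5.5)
The paper gives no proof of this lemma; it imports it from \cite[Thm.~2]{HerbertBansalEtAl2019}, so your reconstruction has to stand on its own. Three parts are fine: defining $m$ as the monotone limit of $\vf(\cdot,t)$, reading off $\converged{\vf}=m$ and $\vf^0\geq m$, and the inclusion $\viability{\safeset^0}\subseteq\{m\geq0\}$.

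The genuine gap is continuity. You correctly flag it as the obstacle, but your argument for it is circular. Stability of viscosity solutions takes \emph{locally uniform} convergence as its input and says nothing about pointwise limits. Dini's theorem takes continuity of the limit as its input. Monotone pointwise convergence supplies neither, so the two cannot bootstrap each other. In fact, no argument can obtain continuity from pointwise convergence alone. Take $\oldyn(\state)=\tanh(\state)$ on $\R$ with $\ctrldyn\equiv0$ and $\distdyn\equiv0$, which is bounded and Lipschitz. Let $\vf^0(\state)=\max\{1-|\state|,-1\}$. The origin is an equilibrium, while every other trajectory reaches $|\state|\geq2$ in finite time. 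So $\vf(\state,t)$ decreases pointwise to $m$ with $m(0)=1$ and $m(\state)=-1$ for $\state\neq0$. Every other conclusion holds: $\{m\geq0\}=\{0\}=\viability{\safeset^0}$ and $\vf^0\geq m$. Yet $m$ is discontinuous. Your fallback is therefore not optional. ``Converges'' must be read as convergence to a continuous stationary viscosity solution, which for this monotone family is equivalent to locally uniform convergence. You already need that reading to invoke~\eqref{eq:stationary_pde} and Lemma~\ref{lem:CI_retract}. Continuity is then a hypothesis, not a conclusion, and the Dini step adds nothing.

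A smaller issue concerns your first route to $\{m\geq0\}\subseteq\viability{\safeset^0}$. The condition $\min_{\diststrategy}\max_{\ctrlsignal}\min_{s}\vf^0(\trajectory(s))\geq0$ gives, for each strategy $\diststrategy$ and each $\epsilon>0$, a control depending on $\diststrategy$ that keeps $\vf^0\geq-\epsilon$. It does not give a single control that works against every strategy, which is what Definition~\ref{def:CIset} demands. Passing from all finite horizons to infinite time would also need a compactness argument you have not supplied. Drop that route and keep the other one: \eqref{eq:stationary_pde} and Lemma~\ref{lem:CI_retract} give invariance, then $m\leq\vf^0$ and maximality in Definition~\ref{def:viability} give the inclusion. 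Under the strengthened convergence hypothesis, that route is valid at the paper's level of rigor.
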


\begin{mylem}\label{lem:superset}
    [Algorithm~\ref{alg:HJR_boundary_march}'s safe set upon convergence is a superset of the viability kernel] 
    If Alg.~\ref{alg:HJR_boundary_march} converges, then, initializing with $\iter{\vf}{0}(\state) = \vf^0(\state)$, we have $\converged{\safeset} \supseteq \viability{\safeset^0}$ upon convergence.
    
    \begin{proof}
        We denote $\Lambda$ as the single-step (i.e. within the while loop) operator for Alg.~\ref{alg:HJR_boundary_march} and $\Gamma$ as the single-step operator for standard reachability, i.e.~\eqref{eq:pde_dp}. 
        It applies to both $\vf(\state)$ and $\safeset$. 
        Hence, $\iter{\safeset}{k+1}=\Lambda (\iter{\safeset}{k})$. 
        Lastly, we note that $\vf(\state) \geq g(\state)$ for all $\state$ if and only if $\safeset \supseteq \mathcal{G}$, with $\safeset$ and $\mathcal{G}$ the 0-superlevel sets of $\vf(\state)$ and $g(\state)$.
        
        Then, if for every iteration $k$, $\iter{\safeset}{k} \supseteq \viability{\safeset^0}$, we converge to a superset of $\viability{\safeset^0}$. 
        We proceed by induction. 
        By definition of the viability kernel (Def.~\ref{def:viability}), we have $\iter{\safeset}{0} \supseteq \viability{\safeset^0}$. 
        
        Next, for iteration $k$ we assume $\iter{\vf}{k}(\state)$ is such that $\iter{\safeset}{k} \supseteq \viability{\safeset^0}$. 
        We hence need to show that $\iter{\safeset}{k+1} \supseteq \viability{\safeset^0}$. 
        Particularly, we will show that $\iter{\safeset}{k+1} \supseteq \mathcal{G}$, and $\mathcal{G}\supseteq \viability{\safeset^0}$, for $\mathcal{G}=\Gamma (\iter{\safeset}{k})$.
    
        By construction, for any value function $\vf(\state)$, by Alg.~\ref{alg:HJR_boundary_march}, we have $\Lambda(\vf(\state)) \geq \Gamma(\vf(\state))$, hence $\Lambda(\safeset) \supseteq \Gamma(\safeset)$. 
        Thus, $\iter{\safeset}{k+1} =\Lambda(\iter{\safeset}{k}) \supseteq \Gamma(\iter{\safeset}{k}) = \mathcal{G}$. 
        
        It remains to show that $\mathcal{G}$ is such that $\mathcal{G} \supseteq \viability{\safeset^0}$. 
        By Lemma~\ref{lem:optimistic} and given $\iter{\safeset}{k} \supseteq \viability{\safeset^0}$, applying standard reachability~\eqref{eq:pde_dp} recursively to $\iter{\vf}{k}(\state)$ recovers a value function $\hat{\vf}^*(\state)=\Gamma \circ \cdots \circ \Gamma(\iter{\vf}{k}(\state))$, such that $\hat{\safeset}^*=\viability{\safeset^0}$. 
        Noting that $\Gamma$ is a contraction mapping, we have $\viability{\safeset^0} = \Gamma \circ \cdots \circ \Gamma (\iter{\safeset}{k}) \subseteq \Gamma(\iter{\safeset}{k}) =\mathcal{G}$. 
    
        Combining, we have $\iter{\safeset}{k+1} \supseteq \viability{\safeset^0} $, concluding the proof.
    \end{proof}
\end{mylem}

We are now ready to prove Theorem~\ref{thm:algsafe}
\begin{proof}[Proof of Thm.~\ref{thm:algsafe}]
    Combining Lemma~\ref{lem:CI_set} and Lemma~\ref{lem:superset} and noting that the viability kernel is defined as a set's largest control invariant subset (Def.~\ref{def:viability}), we directly obtain $\converged{\safeset} = \viability{\safeset^0}$.
\end{proof}

Furthermore, because the algorithm relies on a contractive update, it provides the same strong increasing safety guarantee as \saferefineCBF. 
\begin{mytheorem}[Safer with every iteration]
    Alg.~\ref{alg:HJR_boundary_march} montonically decreases the rate of false positive states with every iteration
\begin{proof}
    \algname guarantees contraction of the value function $\vf$ with every iteration, thus guaranteeing a monotonic decrease of the number of false positive states (Def.~\ref{def:false_positive})
\end{proof}
\end{mytheorem}

\subsection{Implications and use-cases for \algname}
The theoretical results highlight the fundamental difference between our two proposed algorithms. 
\refineCBF can expand the safe set beyond the initial guess, making it ideal for improving overly conservative initializations, including analytical and backup CBFs. 
In contrast, \algname is designed to efficiently patch an initial guess by finding its largest control invariant subset. 
This makes it particularly well-suited for patching ``almost-safe’’ learned CBFs, which may be performant but contain small, critical failure regions. 
One core difference between the theory outlined for \refineCBF and \algname is the difference in the obtained safe set. 
Additionally, unlike \algname, \refineCBF can reduce conservatism after safety-increasing updates by expanding the safe set. 

The primary benefit of \algname is its computational performance. 
However, the speedup is most significant when the ``patch’’ needed is very localized (relative to the entire state space). 
The algorithm's local update structure is inherently well-suited for modern multi-core CPUs, which excel at the logic required to process a small, irregular active set. 
Our CPU-based implementation leverages this for significant performance gains, especially for localized active sets.  
A GPU implementation presents a more nuanced trade-off: while a naive approach would suffer from poor hardware utilization, high performance is theoretically achievable using advanced patterns like stream compaction to first gather the active states. 
This, however, adds a non-trivial pre-processing overhead. 
Given the demonstrated effectiveness of the CPU for patching localized errors, we leave a detailed GPU implementation and performance comparison as an avenue for future work.

\section{Experiments (simulation)}\label{sec:sim} 
Previous work established the efficacy of \refineCBF~\cite{TonkensHerbert2022} and its computational scalable successor, \algname~\cite{TonkensToofanianEtAl2024}, for offline value function refinement. 
We demonstrated that these methods can improve conservative, analytically-derived CBFs for adaptive cruise control, correct unsafe and relax conservative CBFs for high-dimensional quadcopter dynamics, and scale to patch neural network-based CBFs. 
However, these validations were conducted offline on static, pre-defined environments. 

The central contribution of this paper is to evaluate \refineCBF and \algname for in-the-loop adaptation to non-stationary environments. 
Here, the algorithms must update the safety-critical value function in real-time in response to environmental changes, such as the appearance of new obstacles or shifts in system dynamics.

A key challenge for in-the-loop deployment is computational cost. 
However, a principal advantage of our methods is their ability to warm-start the refinement process from a previously computed value function, a prior CBF from a similar environment, or even a simple signed-distance function. 
This warm-starting capability is crucial for rapid adaptation. 
While formal safety guarantees are contingent upon the convergence of the algorithms, \saferefineCBF and \algname theoretically ensure that the value function's safety improves with every iteration. 
In practice, \refineCBF has almost monotonic improvement in our experiments. 
Many realistic environmental pertubations, such as those in our experiments, are ``localized'' in their impact. 
This allows the algorithms to rapidly refine or patch the value function and consistently converge within a few iterations. 

To validate this online adaptation capability, we conduct a series of experiments in realistic simulators, comparing our methods against relevant baselines. Each experiment is repeated over $10$ runs ($5$ if all failures) to ensure statistical significance. 
All simulations were performed on a desktop computer equipped with an NVIDIA RTX 3090 GPU and 32GB of RAM. 
All controllers run at $50$Hz with the value function updates running in parallel with $\iter{\Delta}{k}=0.1$s for all iterations $k$ at a typical rate of $1-3$Hz.

\subsection{Ground robot: Combining obstacle-dependent CBFs} 
For systems like ground vehicles, CBFs for individual obstacles can be derived analytically or via Hamilton-Jacobi (HJ) reachability. 
A common practice is to enforce these individual CBFs as independent constraints in an optimization problem. 
However, these constraints are not necessarily jointly feasible, which can compromise safety.

\begin{figure}[t]

    \centering
    \includegraphics[width=\columnwidth]{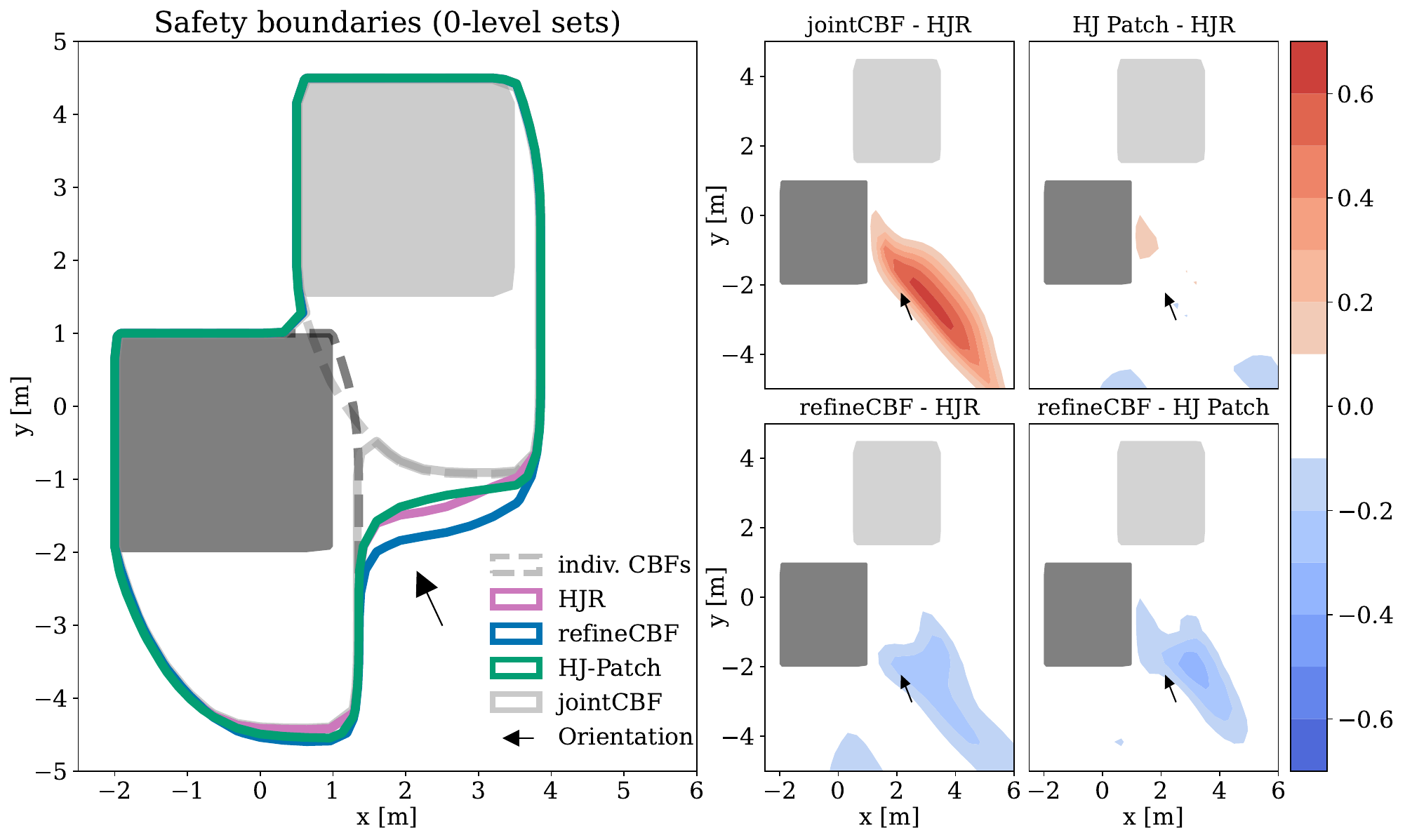}
    \captionsetup{labelfont={bf}}
    \caption{Comparison of converged value functions for \refineCBF, \algname, HJR, and jointCBF at a fixed robot orientation (see arrow) and velocity ($v=1.0$m/s) slice. 
    (Left) The 0-level sets show that naively combining constraints (jointCBF) incorrectly merges the safe sets, creating an unsafe region between the obstacles. \algname and \refineCBF both produce safe, comparable boundaries compared to the HJR baseline.
    (Right) The difference plots highlight that \algname's value function is a close match to the HJR solution, while \refineCBF is more conservative (negative differences) and jointCBF is dangerously optimistic (large positive differences).}
    \label{fig:comparison_levelsets}
\vspace{-0.5cm}
\end{figure}

Our experiment investigates this problem in a Gazebo simulation where a Jackal robot, modeled as a dynamically extended unicycle model with a limited-range LiDAR navigates to a goal. We model the jackal robot as a dynamically extended bicycle model:
\begin{align}\label{eq:extended_unicycle}
    \dot{p}_x = v\cos(\theta), \>\>\>\>\> \dot{p}_y = v \sin (\theta),\>\>\>\>\>\dot{v} = a, \>\>\>\>\> \dot{\theta} = \omega,
\end{align}
with inputs $[a, \omega]$.
Upon detecting a new obstacle, we must update the safety controller online. 
We compare our warm-started methods—\refineCBF (on both CPU and GPU) and \algname (CPU)—against two baselines: naively combining constraints (jointCBFs) and re-solving the problem from scratch using HJ reachability (HJR). 
All methods operate under a minimum forward velocity constraint of 0.1 m/s to prevent deadlock and to showcase how our methods systematically incorporates bounded control inputs. 

First, we analyze the quality of the converged value functions (if given enough time and perfect knowledge of the environment) in a static two-obstacle environment (Fig.~\ref{fig:comparison_levelsets}), where \refineCBF and \algname are initialized from the individual CBF of each obstacle. 
The individual CBFs, $\vf_i$'s, are computed by running~\eqref{eq:pde_dp} until convergence for the rectangular single obstacle constraint function $\constraintfunc(x)=\constraintfunc(p_x,p_y)$, thus $\vf_i(\state)=\vf_i(p_x,p_y,v,\omega)$. 
Naively combining individual CBFs $\vf(\state)=min_i\{\vf_i(\state)\}$ (jointCBF) produces an overly optimistic and unsafe value function. 
In contrast, \algname successfully computes a safe value function that closely approximates the ground truth from HJR, while \refineCBF is safe but more conservative.

Next, we evaluate the critical challenge of online adaptation as the robot discovers obstacles in real time (Fig.~\ref{fig:jackal_sim_online}). 
The results, quantified in Table~\ref{tab:jackal_comp}, demonstrate that the baseline approaches lead to collisions for distinct reasons. 
The jointCBF method is inherently unsafe, as it fails to account for joint feasibility between obstacle constraints. 
In parallel, re-computing the ground-truth safe set from scratch (HJR) is computationally intractable for real-time updates. 
In contrast, while \refineCBF requires a GPU to be fast enough for safe online updates, \algname provides robust safety performance on a CPU in 9/10 rollouts.
A single outlier is visible in Fig.~\ref{fig:jackal_sim_online} (right). The logs show a communication delay between computation of $\iter{\vf}{k}$ and its transmission to the safety filter at 2 subsequent iterations, which could explain the failure. 
This highlights \algname's computational efficiency, making it a viable solution for resource-constrained platforms where GPU acceleration is unavailable. 

\begin{figure}[t]

    \centering
    \includegraphics[width=\linewidth]{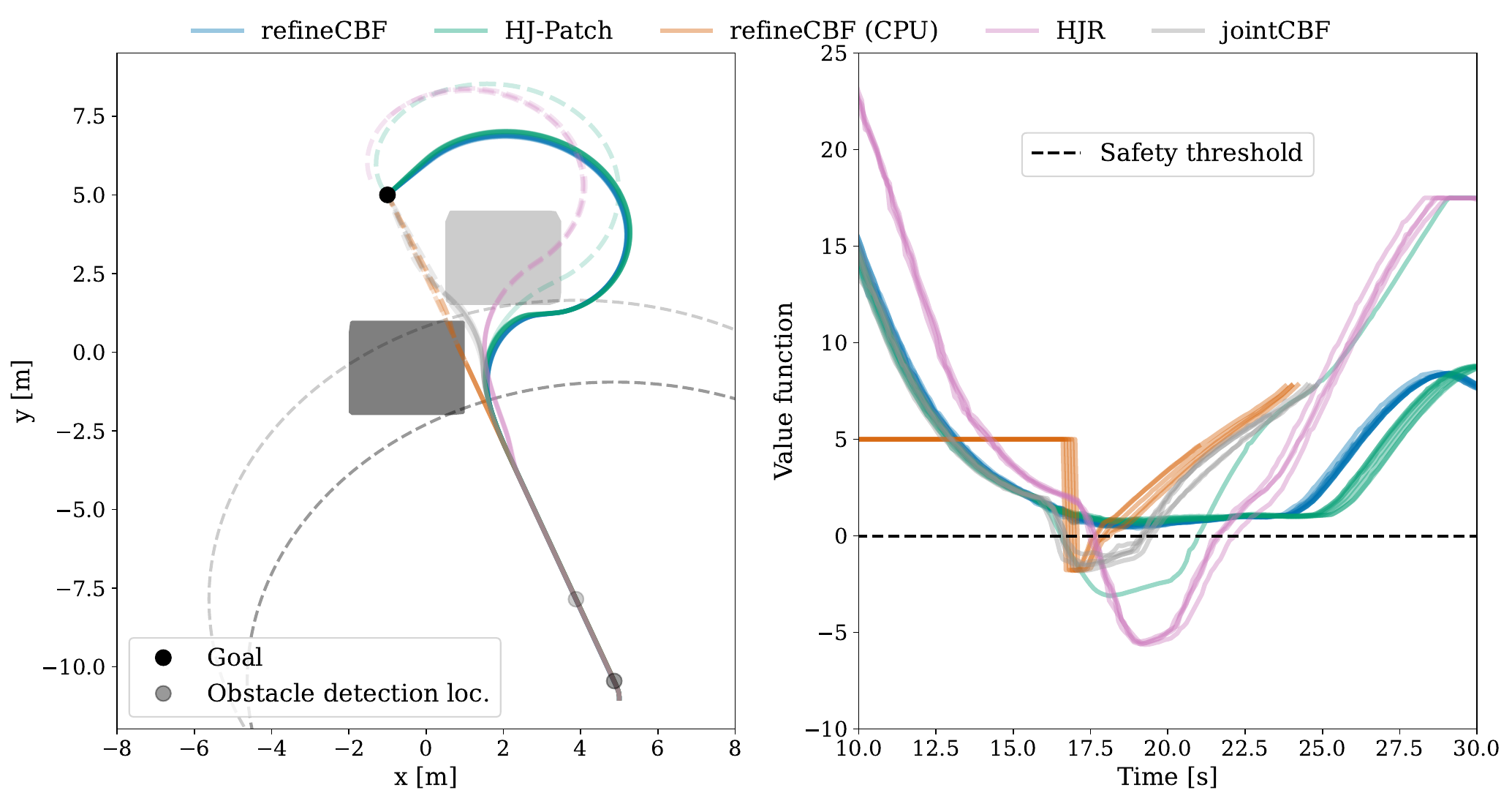}
    \caption{Online performance with limited-range obstacle detection. These plots show system behavior across 10 simulation rollouts where obstacles are discovered online. 
    (Left) shows the trajectories for all methods (with dotted lines for trajectories post collision) for multiple rollouts. 
    We additionally visualize the detection points based on the limited range of the sensor. 
    (Right) visualizes the value function (with the safety boundary). 
    These rollouts demonstrate the need for refinement, as HJR and jointCBF lead to collision. 
    Additionally, when a GPU is not available, \algname's computational efficiency enables patching rapidly to provide safety, whereas \refineCBF (CPU) fails. 
    }
    \label{fig:jackal_sim_online}
\end{figure}

\begin{table}[t]
\vspace{-0.1cm}
\caption{Quantiative measure ($\downarrow$ better) of number of \textbf{collisions} of jackal simulation experiments for online adaptation. 
Experiments that constantly lead to collisions are only executed $5$ times.
These measures highlight the benefits of \refineCBF and \algname.
}
\setlength{\tabcolsep}{3.5pt}
\begin{tabular}{c c c c c}
\refineCBF &
\refineCBF (CPU) &
\algname (CPU) &
jointCBFs &
HJR\\ \hline 
0/10 & 5/5 & 1/10 & 5/5 & 5/5
\end{tabular}

\label{tab:jackal_comp}
\vspace{-0.6cm}
\end{table}

\vspace{-0.3cm}
\subsection{Quadcopter: Backup CBF comparison} 
Backup Control Barrier Functions (Backup CBFs) are a method for ensuring safety by relying on a pre-defined, expert-chosen backup maneuver. 
A control action is deemed safe if the system can always execute this fixed backup policy (e.g., ``brake hard") for a set time without violating safety constraints~\cite{SingletarySwannEtAl2022}. 
This approach avoids complex online optimization but is inherently limited by the quality and applicability of the single, fixed backup policy. 
We compare against a variant that mixes nominal and backup controllers, as it is more computationally tractable. 

We demonstrate this limitation in a planar quadcopter simulation where the goal is occluded by an obstacle (Fig.~\ref{fig:drone_sim}), with dynamics:
\begin{align}\label{eq:planar_quad_no_disturbance}
    \dot{p}_y = v_y, \>\>\>\>\> \dot{p}_z = v_z, \>\>\>\>\> \dot{v}_y = g \tan(\phi), \>\>\>\>\> \dot{v}_z = T - g,
\end{align}
with inputs $u = [\phi, T]$. 
The drone uses a simulated front-facing camera, meaning the safety constraint requires it to remain within the currently visible, obstacle-free space. 
The nominal controller is a simple LQR designed to hover at the goal, which has no inherent obstacle avoidance capabilities. 
Our method, \refineCBF, is initialized with the signed-distance function $\constraintfunc(x)=\constraintfunc(p_y,p_z)$ of the initially known environment, which can be computed efficiently analytically (for polytopes) and through discretization using Fast Marching Methods~\cite{Sethian1999} (any environment).

The experiment highlights a critical failure mode for methods relying on fixed expert policies. 
The backupCBF method successfully maintains safety but its simple backup maneuver (here: stabilize to a hovering position) is insufficient to navigate the obstacle, resulting in deadlock (Fig.~\ref{fig:drone_sim}). 
In contrast, \refineCBF characterizes a safety filter over the full dynamics of the system. This obtains more informative gradients that are fully state-dependent, enabling the drone to navigate around the obstacle, as the safe set at a high velocity ``nudges'' the drone upwards, see Fig.~\ref{fig:sub2} (left), for a visualization. 

As new free space is revealed, \refineCBF seamlessly expands the safe set online (Fig.~\ref{fig:sub2}, center and right).
We do not compare against \algname in this scenario, as its formulation does not support the expansion of the safe set into previously unknown areas.
\begin{figure}
    \centering
    \includegraphics[width=\linewidth]{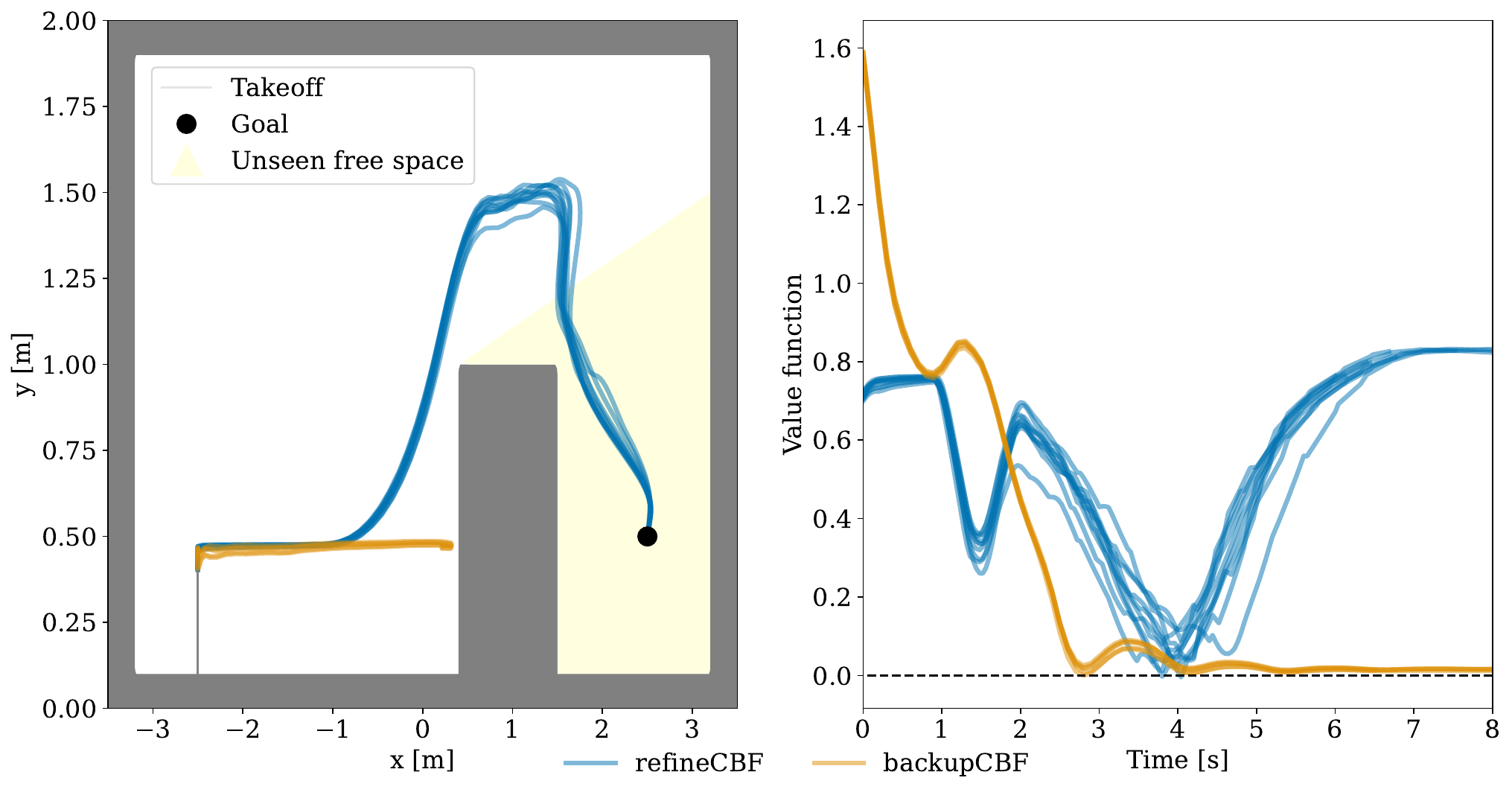}
    \caption{Online adaptation and deadlock avoidance in a quadcopter navigation task. The quadcopter must navigate to a goal in a region initially occluded by an obstacle. 
    (Left) Trajectories show \refineCBF successfully discovering a path over the obstacle, progressively incorporating the newly seen free space (yellow) into the safe set, then iteratively incorporating this new information into its safe set. 
    In contrast, the backupCBF method gets stuck in a deadlock, unable to find a path to the goal with its fixed backup policy. 
    (Right) The value function confirms this behavior. \refineCBF's value dips as it navigates the obstacle and unseen space boundary, and then recovers as the safe set expands. The backupCBF value drops and oscillates near the safety boundary (value=0), indicating a persistent deadlock. Further analysis on why \refineCBF avoids deadlock is provided in Fig.~\ref{fig:sub1}.}
    \label{fig:drone_sim}
    \vspace{-0.5cm}
\end{figure}

\section{Experiments (hardware)}\label{sec:hardware}
\begin{figure}[t]
    \centering
    \vspace{-0.8cm}
    \includegraphics[width=\linewidth]{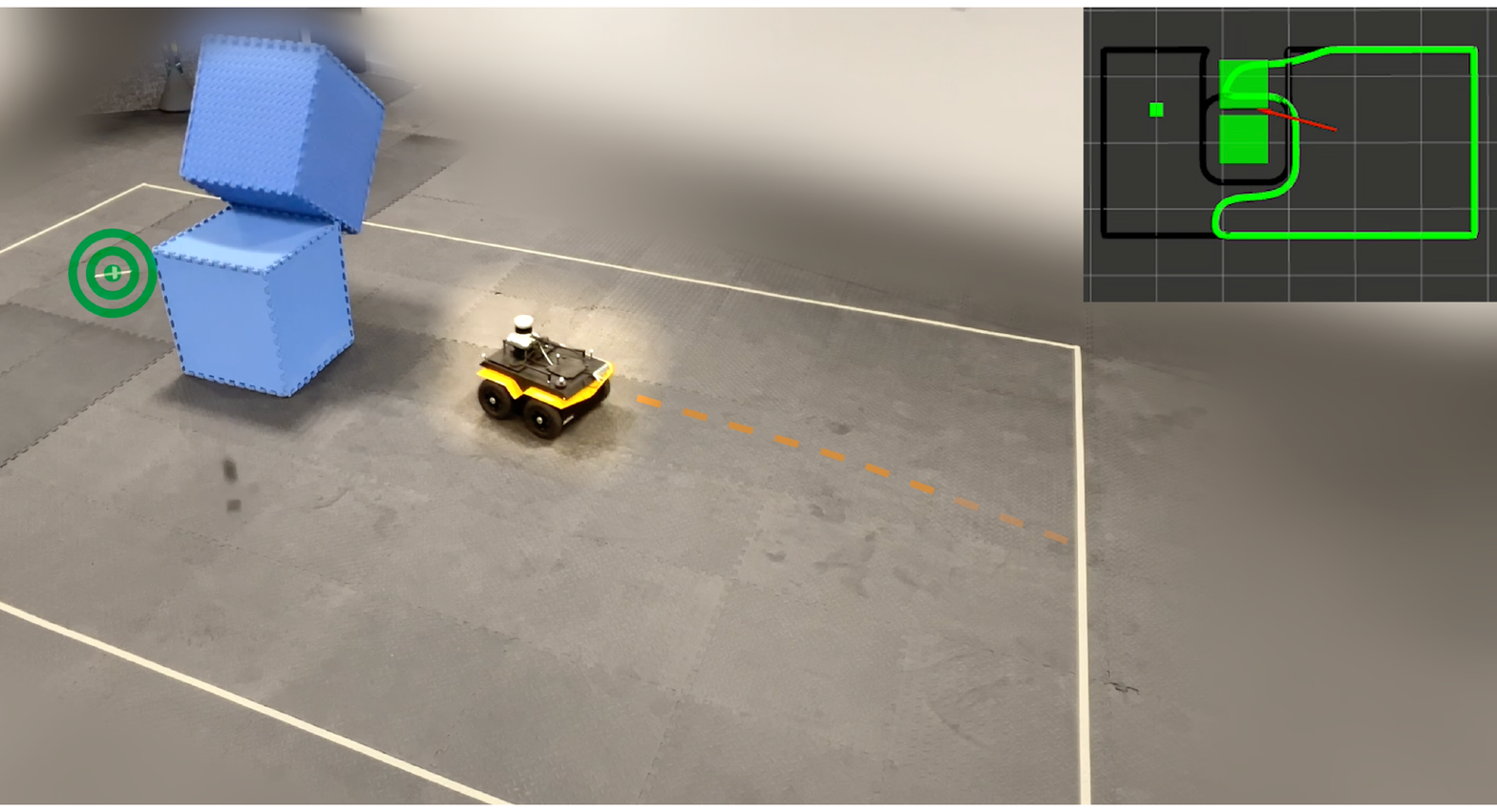}
    \vspace{-1.2cm}
    \caption{
    A snapshot of the Jackal hardware experiment. 
    The robot detects the suddenly fallen obstacle blocking its path. The top-right inset shows the robot's internal map at the moment of detection; the safe set (green) has not yet been updated to incorporate the new SDF map (black), highlighting the need for rapid, in-the-loop replanning to ensure safety.}
    \label{fig:jackal_hw_exp}
\end{figure}

\begin{figure}[t]
    \centering
    \includegraphics[width=\linewidth]{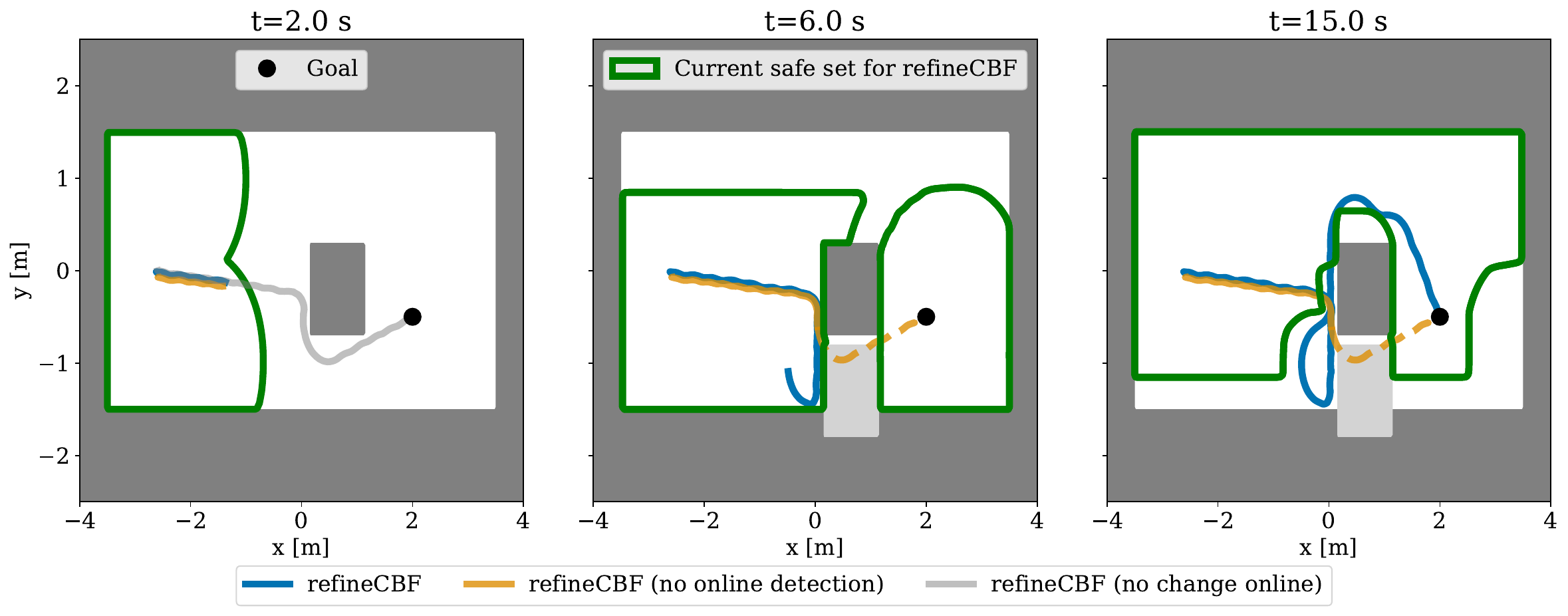}
    \caption{
    Trajectory comparison for the Jackal hardware experiment. 
    These plots show time snapshots of the system's trajectory after the environment is altered at $t=3.0s$. 
    The fully adaptive \refineCBF (blue) successfully backtracks to find a new, safe path. 
    The non-adaptive baseline (refineCBF (no online detection), orange) operates on a stale map, leading directly to a collision with the new obstacle.}
    \label{fig:jackal_hw}
\end{figure}
To validate our approach on a physical platform, these hardware experiments are designed to demonstrate the real-time adaptation capabilities of \refineCBF. 
\begin{itemize}
    \item Fully Adaptive \refineCBF (Our Method): Continuously refines its value function in response to environmental changes perceived in real time by the robot's sensors (simulated sensing). 
    \item Static \refineCBF (Baseline): Operates \refineCBF on a static representation of the environment captured only at initialization. This controller does not incorporate subsequent perceptual updates, highlighting the necessity of online adaptation for safety.
\end{itemize}
Both still iteratively update the value function online, but the baseline does not observe environmental changes in the loop. 
All experiments were conducted on a desktop computer with an NVIDIA RTX-4090 and 64GB of RAM. 
All controllers run at $50$Hz with the value function updates running in parallel with $\iter{\Delta}{k}=0.1$s for all iterations $k$ at a typical rate of $2-5$Hz.
\subsection{Mobile robots: Obstacle appears}
We validate our approach in a dynamic hardware experiment that mimics a challenging search-and-rescue scenario where the environment changes unexpectedly. 
We initialize the safe set using a CBF centered at the starting position $p_0$ with a small initially safe region, with a learned $\initialvf(x)$ using~\cite{DawsonQinEtAl2021} with $\constraintfunc(x)=\constraintfunc(p_x,p_y)$ a circle around $p_0$. 
A Clearpath Jackal UGV, modeled using~\eqref{eq:extended_unicycle}, is tasked with reaching a goal placed behind a two-tiered obstacle structure (Fig.~\ref{fig:jackal_hw_exp}). 
During operation, the top block is deliberately knocked over, blocking the robot's initial path and forcing an online adaptation.

As shown in Fig.~\ref{fig:jackal_hw}, the fully adaptive \refineCBF immediately perceives the new obstacle, backtracks, and the safety filter enables safely getting to the goal. 
In contrast, the non-adaptive baseline, which operates on a stale map of the initial environment, fails to see the change and collides with the fallen block. 
This result demonstrates that with a simple nominal controller, our online adaptation is critical for maintaining safety in dynamic environments. 
We present a single, representative run for each case, as this outcome was highly consistent across $5$ trials.

\subsection{Drone: Sensing limitations}
\begin{figure}[t]
    \centering
    \vspace{-0.8cm}
    \includegraphics[width=\linewidth]{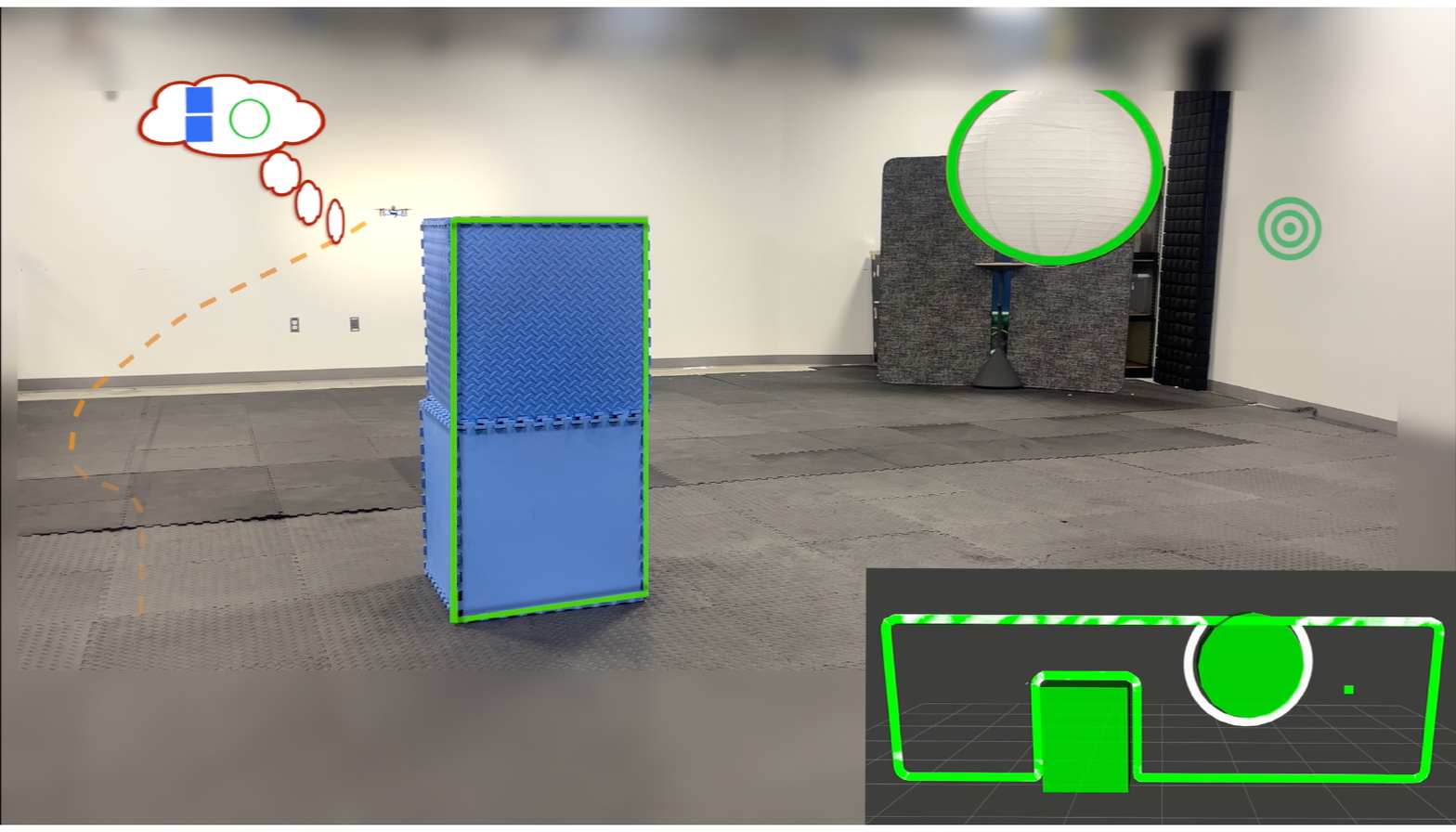}
    \vspace{-1.2cm}
    \caption{Quadcopter hardware experiment with a previously occluded obstacle. 
    Snapshot of the physical setup where a quadcopter navigates towards its goal (green circle). 
    The circular obstacle is initially hidden from view by the blue block. 
    The bottom-right inset shows the constraint and safe set at the moment the new obstacle is first detected; the safe set (green) has not yet been updated, highlighting the challenge for the real-time safety filter.
    }
    \label{fig:hw_exp_circle}
    \vspace{-0.3cm}
\end{figure}

\begin{figure}[t]
    \centering
    \includegraphics[width=\linewidth]{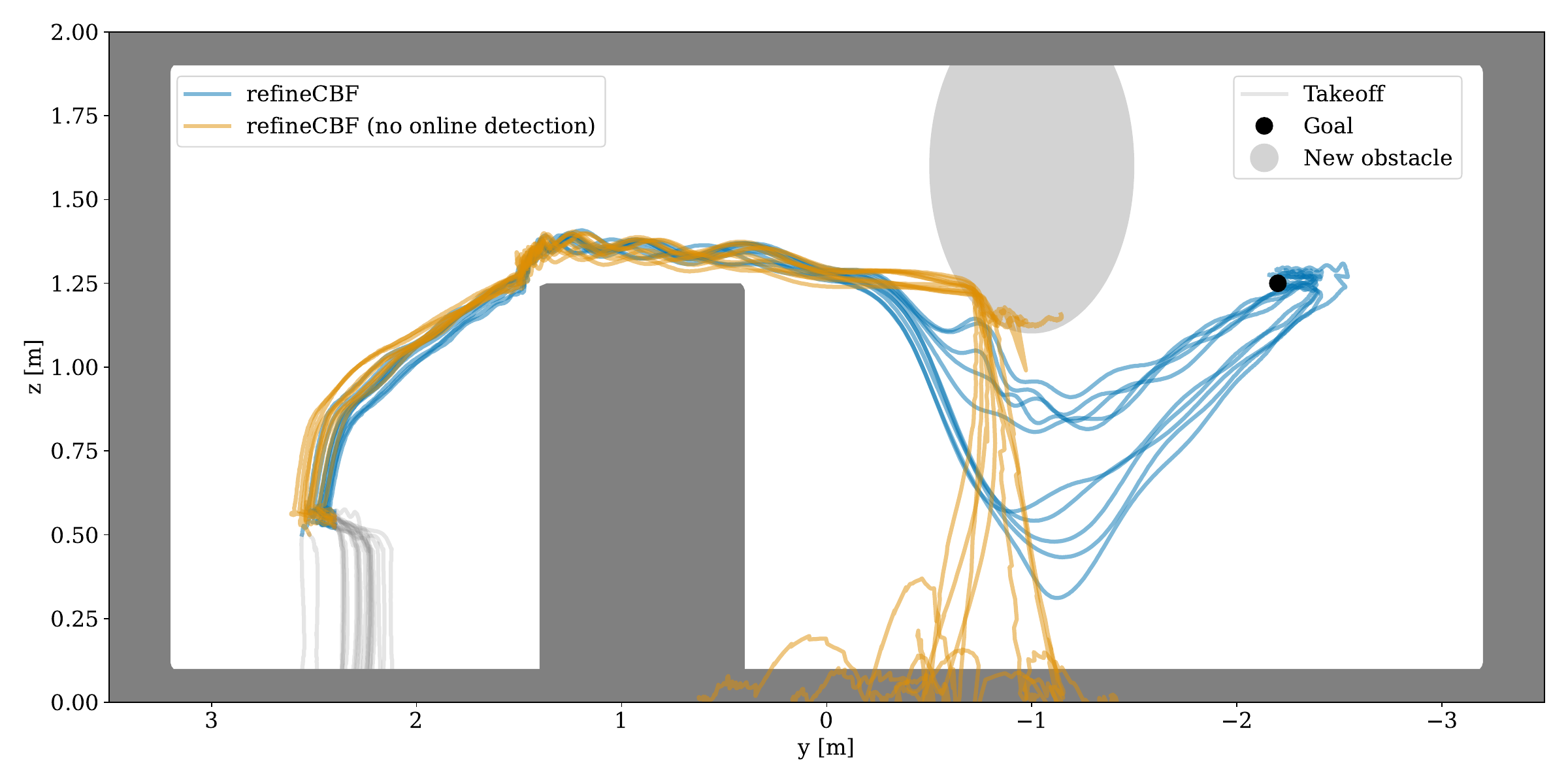}
    \caption{
    Trajectories from 10 hardware trials comparing adaptive and non-adaptive \refineCBF. 
    The quadcopter must navigate an environment with an unexpected obstacle (light gray circle) that only becomes visible after clearing the first obstacle. 
    The fully adaptive \refineCBF (blue) successfully avoids the new obstacle in all 10 trials. 
    The non-adaptive baseline (refineCBF (no online detection), orange), operating under the optimistic assumption that unseen space is free, collides in every trial.}
    \label{fig:cf_obstacle}
    \vspace{-0.6cm}
\end{figure}
Robots with forward-facing sensors like cameras or LiDAR have a limited field of view, forcing them to make assumptions about unobserved areas. 
A common and efficient strategy is to plan optimistically, assuming unobserved space is obstacle-free until proven otherwise. 
We designed a hardware experiment to demonstrate that \refineCBF's rapid online adaptation makes this optimistic approach both viable and safe. 
\begin{figure}[t]
    \centering
    \vspace{-0.8cm}
    \includegraphics[width=\linewidth]{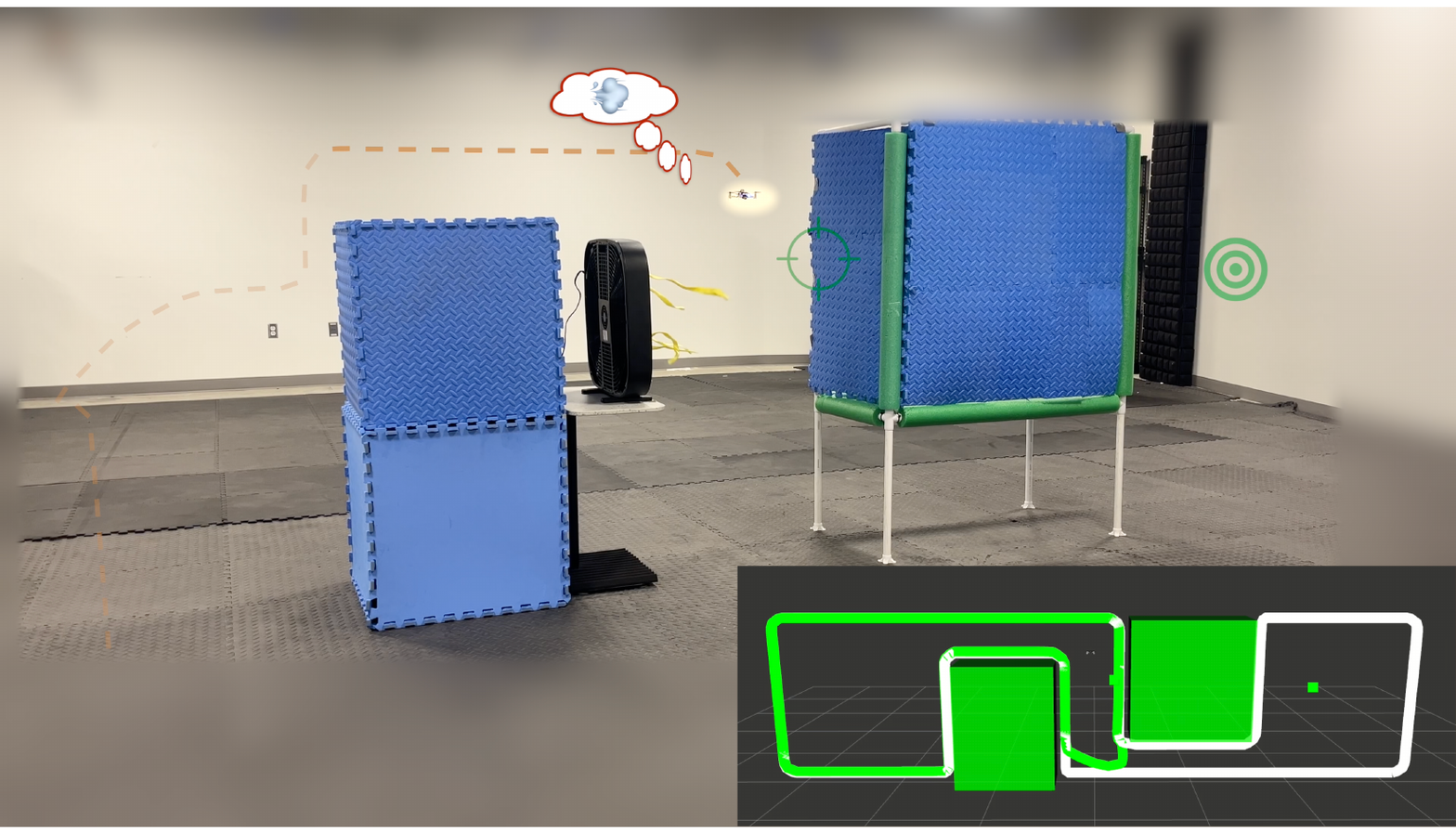}
    \vspace{-1.2cm}
    \caption{
    Hardware experiment with unmodeled aerodynamic disturbances. 
    A snapshot of the quadcopter navigating through a narrow passage containing a fan (wind visualized with yellow streamlines). 
    While the geometry of the passage is known, the aerodynamic disturbance produced by the fan's and its interaction with the environment is not and must be detected and compensated for in real time.}
    \label{fig:wind_hw_exp}
    \vspace{-0.3cm}
\end{figure}

\begin{figure}[t]
    \centering
    \includegraphics[width=\linewidth]{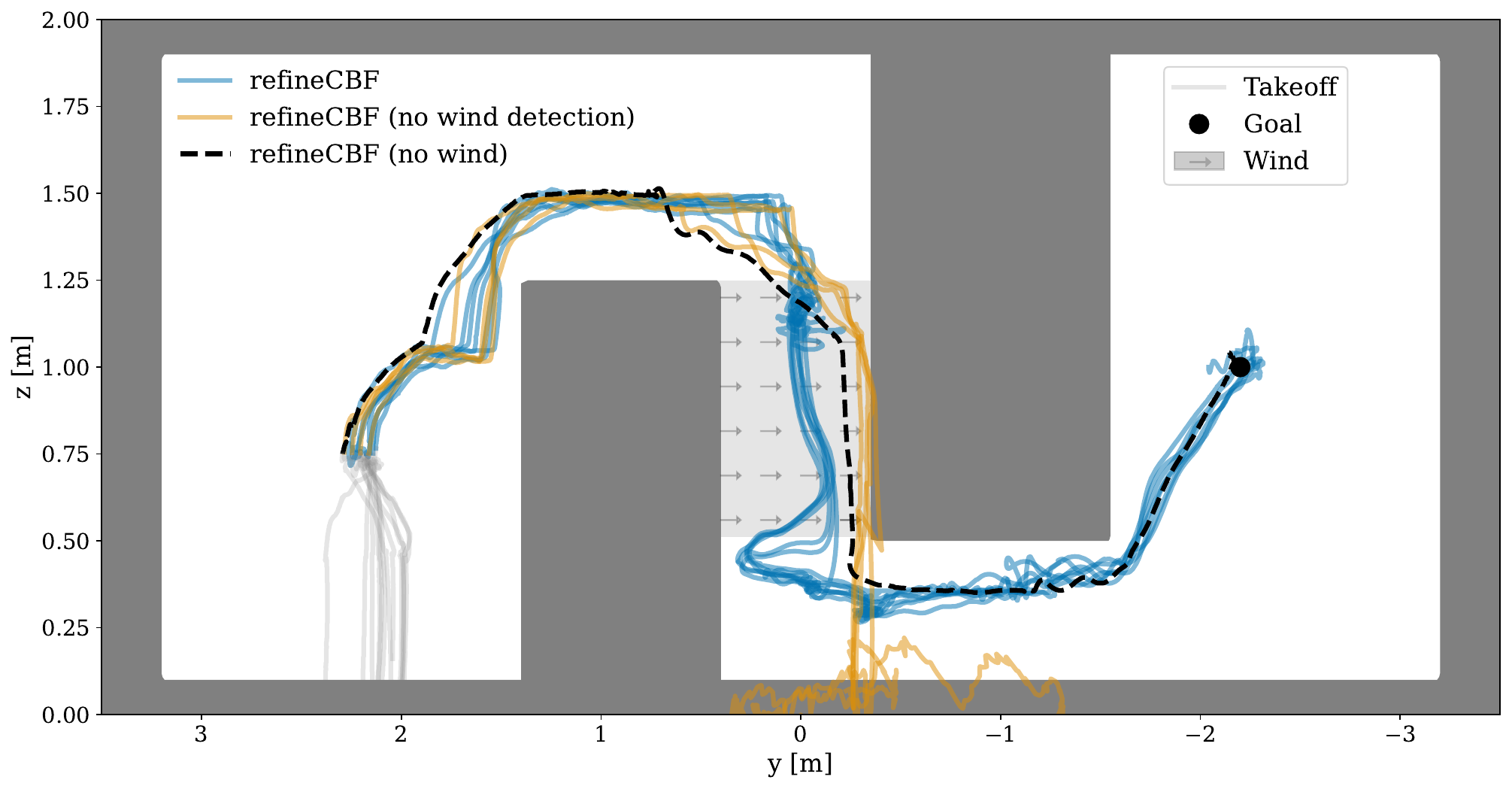}
    \caption{
    Trajectories from 10 hardware trials with a dynamic disturbance. The quadcopter encounters a strong wind disturbance while flying between the two known obstacles. 
    The safety value function is initialized as a CBF centered at the starting position with a small initially safe region. 
    The adaptive \refineCBF, which incorporates real-time disturbance changes, successfully compensates and navigates the passage in 9/10 trials. 
    The baseline controller (orange), which assumes nominal dynamics, fails in every trial, consistently being pushed into the wall by the unexpected disturbance. 
}
    \label{fig:cf_wind}
    \vspace{-0.7cm}
\end{figure}
In our scenario, a quadcopter, modeled using~\eqref{eq:planar_quad_no_disturbance}, must navigate to a goal that is initially occluded by a large rectangular obstacle (Fig.~\ref{fig:hw_exp_circle}). 
Hanging from the ceiling behind this obstacle is a second, circular obstacle, completely hidden from the robot's initial view. 
The initial value function $\initialvf(x)=\constraintfunc(x)$ is the SDF (computed as described in Section~\ref{sec:sim}).
Across 10 consecutive trials, the fully adaptive \refineCBF successfully perceived the new obstacle (recomputing the constraint function), updated its value function in real time (re-converging in under 2 seconds), and adjusted its trajectory to safely reach the goal. 
In contrast, the non-adaptive baseline, acting on its initial optimistic map, consistently collided with the unexpected obstacle, as shown in the trajectories in Fig.~\ref{fig:cf_obstacle}.

Defining the safety value function over all states enables navigating between obstacles without a sophisticated nominal policy, see Fig.~\ref{fig:sub2} for details.

\subsection{Quadcopter: Varying disturbances}

Our final hardware experiment tests \refineCBF's ability to adapt not just to geometric changes, but also to unmodeled dynamic disturbances.
In this scenario, a quadcopter navigates a fully known environment, but a fan creates a strong, localized wind disturbance within a narrow passageway (Fig.~\ref{fig:wind_hw_exp}). 
We consider the following dynamics:
\begin{align*}\label{eq:planar_quad_disturbance}
    \dot{p}_y = v_y + d_0, \>\>\> \dot{p}_z = v_z + d_1, \>\>\> \dot{v}_y = g \tan(\phi), \>\>\> \dot{v}_z = T - g,
\end{align*}
with inputs $u = [\phi, T]$ and disturbances $d=[d_0,d_1]$.  
We simulate a drone equipped with a flow sensor that allows it to detect and incorporate this disturbance in the safety analysis in real time, with a default value of $d_0,d_1\in\mathcal{D}_0=0$. 
The bounds $\mathcal{D}_\text{fan}$ are fitted a priori for different fan speeds. 
The initial value function $\initialvf$ is a learned CBF, using~\cite{DawsonQinEtAl2021}, centered around the initial hover point $p_0$.

Upon entering the passage and detecting the airflow, the fully adaptive \refineCBF rapidly updates its safety value function to account for these new dynamics. 
As the results from 10 trials show (Fig.~\ref{fig:cf_wind}), this allows the drone to successfully counteract the disturbance and traverse the passage in 9 out of 10 attempts. 
We hypothesize that the single failure occured due to the non-uniformity of the ``wind'' provided by the radial fan, which can destabilize the quadcopter. 
The non-adaptive baseline, operating on the assumption of nominal dynamics, is unable to compensate for the wind and consistently pushed into the obstacle.

For this experiment, we intialized the CBVF conservatively around its initial hover point, see Fig.~\ref{fig:sub3} (left). This experiment qualitatively demonstrates that the CBVF is able to expand from an initial safe but conservative safe set (Fig.~\ref{fig:sub3}, center and right). It leverages \saferefineCBF to ensure contraction upon detection of the wind disturbance. The nominal policy is a proportional controller with sub-goals dynamically updated to be the closest safe state (at $0$ velocity) to the final goal.

\section{Practical Use-Cases, Limitations \& Future Directions}\label{sec:future}
This work provides a step towards scalable, real-time safety adaptation for robotic systems. 
Our experiments demonstrate that by warm-starting from a prior value function, methods like \refineCBF and \algname can successfully adapt to sudden environmental changes, such as new obstacles or unmodeled dynamic disturbances (e.g., wind). 
This capability is critical for practical applications in evolving environments, including search-and-rescue, autonomous logistics, and long-term mobile robot deployment. 
However, several limitations highlight important avenues for future research. 

While promising, our approach has three principal limitations: idealized sensors, model dependency, and computational scalability.
\begin{enumerate}
    \item \textbf{Idealized Sensors}: In our hardware experiments, we focus on the controls and safety pipeline, and therefore simulated the sensing pipeline and used high-fidelity motion capture for state estimation. 
    This setup bypasses some significant challenges of a full-stack implementation, such as incorporating uncertainty inherent in state estimation and perception.
    \item \textbf{Known dynamics model:} Our methods fundamentally rely on an accurate known dynamics model with bounded uncertainties or disturbances, i.e. $\cldyn(x,u,d)$,
    limiting adoption to a subset of robotic domains. 
    \item \textbf{Scalability of dynamic programming:} The core of both \refineCBF and \algname is based on dynamic programming (DP), operating on a discretized state-space grid. 
    While effective for the systems we studied (up to 4D for on-the-fly refinement), relying on DP inherently limits scalability. 
\end{enumerate}
These limitations highlight several key avenues for future work. The most immediate practical step is to integrate realistic sensor modalities, developing methods that update the value function directly from noisy sensor data and explicitly account for perceptual uncertainty.

\begin{figure}[t]
    \centering

    \begin{subfigure}[b]{0.5\textwidth}
        \includegraphics[width=\linewidth]{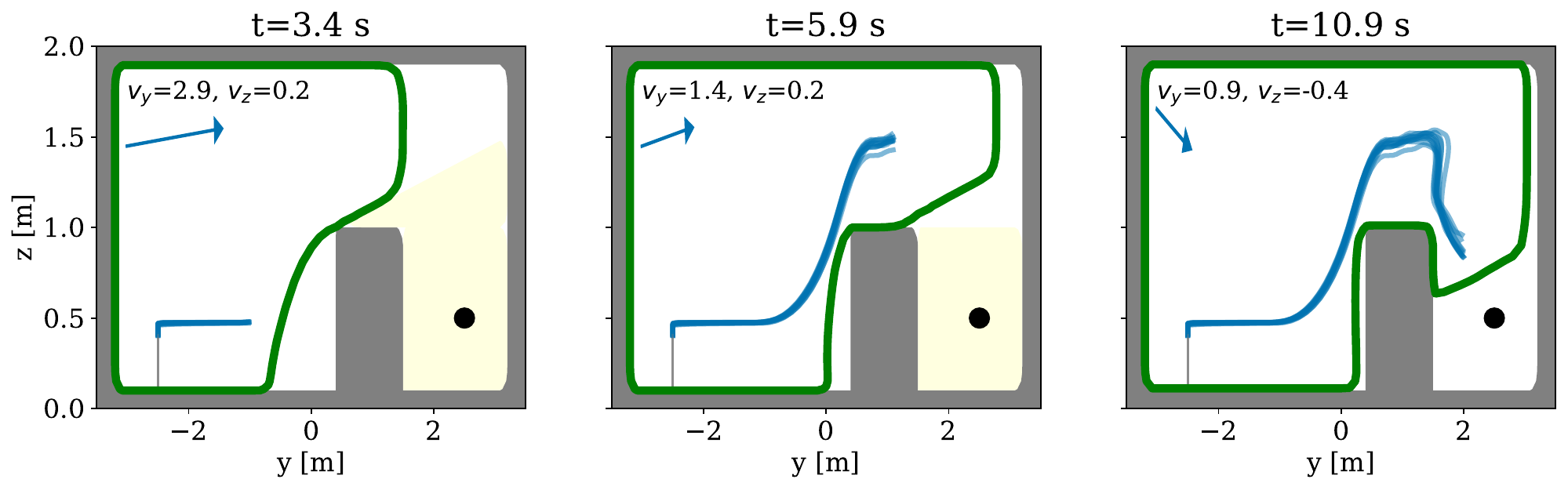}
        \caption{Visualized is the CBF expansion over time for the simulated Crazyflie UAV experiment. The CBVF is initialized as the SDF for the visible free space, and its corresponding safe set is shown in green. Note that the CBVF is in 4D; we therefore show the $y-z$ slice at the current $v_y,v_z$ values (indicated by the blue arrows) along the trajectory. The unseen region is marked in yellow. This experiment demonstrates the benefit of computing CBFs in higher dimensions: at higher speeds (left), the UAV must deviate from the square obstacle more quickly; this results in a CBVF whose gradients force the UAV upwards, avoiding the deadlock that occurs with position-based CBFs as in Fig~\ref{fig:drone_sim}.}
        \label{fig:sub1}
    \end{subfigure}
    
    \begin{subfigure}[b]{0.5\textwidth}
        \includegraphics[width=\linewidth]{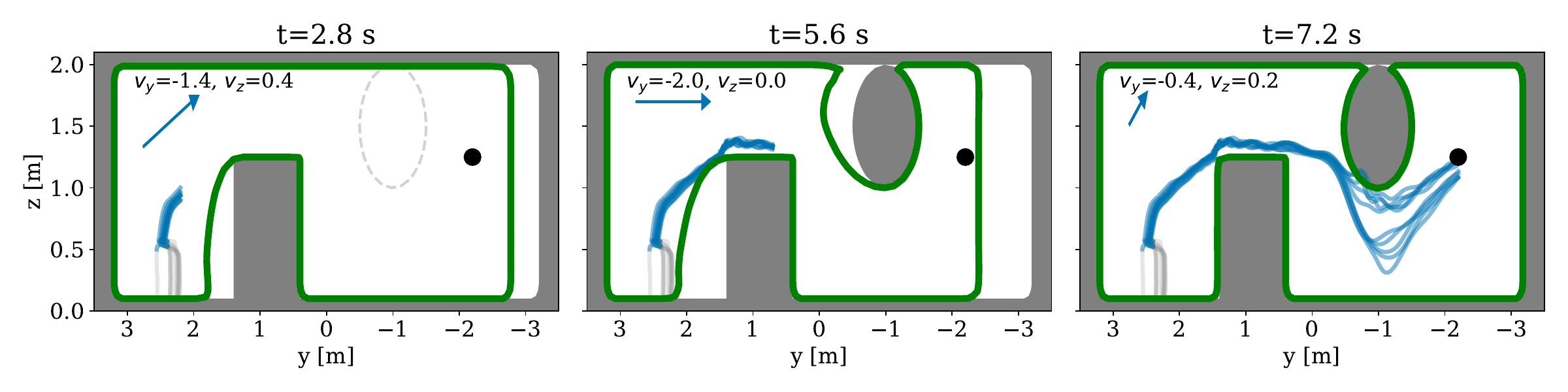}
        \caption{Visualized is the 2D slice of the safe set over time for the hardware Crazyflie UAV experiment. The CBVF is initialized as the SDF (without the unobserved circular obstacle). The safe sets in (left, center) are at high horizontal velocity, which results in the CBVF guiding the drone up and down, respectively, proactively providing safety by incorporating the full dynamics and state space information. 
        The comparison to a static \refineCBF is given in Fig.~\ref{fig:cf_obstacle}. 
        This once again highlights the importance of velocity information in the safety filter to avoid deadlock.
        }
        \label{fig:sub2}
    \end{subfigure}
    
    \begin{subfigure}[b]{0.5\textwidth}
    
        \includegraphics[width=1.0\linewidth]{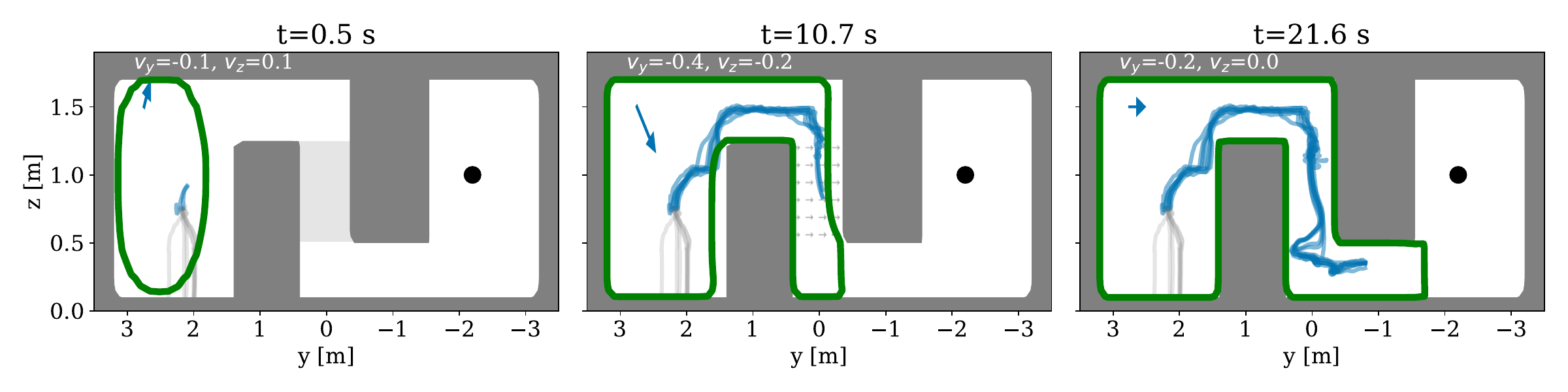}
        \caption{Visualized is the 2D slice of the safe set over time for the wind-affected hardware Crazyflie UAV experiment. The CBVF is initialized within a known conservative safe set (left).
        The safe set grows initially, then shrinks slightly upon observing the wind (center) leveraging \saferefineCBF. 
        The safe set continues to expand while providing control invariance after the wind corridor (right), before proceeding to its goal, see also Fig.~\ref{fig:cf_wind}.}
        \label{fig:sub3}
        
    \end{subfigure}

    \caption{Safe set across the position space evaluated at the current velocities (arrow and slice highlight the velocities). A broad range of value function initializations are presented (conservative SDF (top), optimistic SDF (center), and conservative CBF guess (bottom)), in addition to a broad range of online changes. 
    \refineCBF (or \saferefineCBF) succeeds in all scenarios, highlighting the diversity of scenarios where it can be deployed. 
    In addition, all experiments highlight that encoding safety through a value function over all states, not just positional states, is required in dynamic scenarios to preserve safety and maintain performance.}
    \label{fig:main}
\end{figure}

To address scalability, a promising direction is to merge our framework with modern learning-based approaches. While fully learned value functions have shown promise for high-dimensional systems, they are typically static and cannot adapt online. Some recent approaches have started to enable online variation by parameterizing the value function based on disturbances or obstacles~\cite{LiChen2025}. Building on this, a compelling research path is to synthesize our more general real-time adaptation techniques with these expressive, high-dimensional function approximators. Such a merger could combine the scalability of deep learning with the dynamic responsiveness demonstrated in our work, enabling safe, real-time adaptation for complex robots like manipulators and humanoids.

\section{Conclusions}\label{sec:conclusion}
This work demonstrates that HJ reachability, a cornerstone of formal safety analysis, can be made practical for real-time robotic applications when deployed with a CBF-like safety filter. 
We presented two algorithms, \refineCBF and its successor \algname, that effectively adapt safety-critical value functions to in-the-loop changes in a robot's environment or dynamics. 
By leveraging warm-starting, our methods rapidly re-converge to a safe policy following unexpected events, a capability we validated against multiple baselines in realistic simulations and on physical hardware. 
Our results highlight a key finding: warm-started HJ reachability provides a principled and effective framework for ensuring safety in dynamic, real-world settings, bridging the gap between theoretical guarantees and practical deployment.
\bibliographystyle{IEEEtran}

\bibliography{bibs/main, bibs/ASL_papers, bibs/SASLab}

\end{document}